\definecolor{lightgray}{rgb}{.9,.9,.9}
\definecolor{darkgray}{rgb}{.4,.4,.4}
\definecolor{purple}{rgb}{0.65, 0.12, 0.82}
\def\UrlSpecials{\do\~{\kern -.15em\lower .7ex\hbox{~}\kern .04em}} \catcode`~=13 
\newcommand{\calC}{\mathcal{C}}
\newcommand{\calN}{\mathcal{N}}
\newcommand{\calS}{\mathcal{S}}
\newcommand{\calX}{\mathcal{X}}
\newcommand{\ba}{\mathbf{a}}
\newcommand{\bA}{\mathbf{A}}
\newcommand{\bb}{\mathbf{b}}
\newcommand{\bB}{\mathbf{B}}
\newcommand{\bE}{\mathbf{E}}
\newcommand{\bH}{\mathbf{H}}
\newcommand{\bI}{\mathbf{I}}
\newcommand{\bs}{\mathbf{s}}
\newcommand{\bw}{\mathbf{w}}
\newcommand{\bW}{\mathbf{W}}
\newcommand{\bx}{\mathbf{x}}
\newcommand{\bX}{\mathbf{X}}
\newcommand{\by}{\mathbf{y}}
\newcommand{\bz}{\mathbf{z}}
\newcommand{\bZ}{\mathbf{Z}}
\newcommand{\rmd}{\mathrm{d}}
\newcommand{\bbE}{\mathbb{E}}
\newcommand{\bbP}{\mathbb{P}}
\newcommand{\bbR}{\mathbb{R}}
\DeclareMathAlphabet{\mathbsf}{OT1}{cmss}{bx}{n}
\DeclareMathAlphabet{\mathssf}{OT1}{cmss}{m}{sl}
\DeclareSymbolFont{bsfletters}{OT1}{cmss}{bx}{n}  
\DeclareSymbolFont{ssfletters}{OT1}{cmss}{m}{n}
\DeclareMathSymbol{\bsfGamma}{0}{bsfletters}{'000}
\DeclareMathSymbol{\ssfGamma}{0}{ssfletters}{'000}
\DeclareMathSymbol{\bsfDelta}{0}{bsfletters}{'001}
\DeclareMathSymbol{\ssfDelta}{0}{ssfletters}{'001}
\DeclareMathSymbol{\bsfTheta}{0}{bsfletters}{'002}
\DeclareMathSymbol{\ssfTheta}{0}{ssfletters}{'002}
\DeclareMathSymbol{\bsfLambda}{0}{bsfletters}{'003}
\DeclareMathSymbol{\ssfLambda}{0}{ssfletters}{'003}
\DeclareMathSymbol{\bsfXi}{0}{bsfletters}{'004}
\DeclareMathSymbol{\ssfXi}{0}{ssfletters}{'004}
\DeclareMathSymbol{\bsfPi}{0}{bsfletters}{'005}
\DeclareMathSymbol{\ssfPi}{0}{ssfletters}{'005}
\DeclareMathSymbol{\bsfSigma}{0}{bsfletters}{'006}
\DeclareMathSymbol{\ssfSigma}{0}{ssfletters}{'006}
\DeclareMathSymbol{\bsfUpsilon}{0}{bsfletters}{'007}
\DeclareMathSymbol{\ssfUpsilon}{0}{ssfletters}{'007}
\DeclareMathSymbol{\bsfPhi}{0}{bsfletters}{'010}
\DeclareMathSymbol{\ssfPhi}{0}{ssfletters}{'010}
\DeclareMathSymbol{\bsfPsi}{0}{bsfletters}{'011}
\DeclareMathSymbol{\ssfPsi}{0}{ssfletters}{'011}
\DeclareMathSymbol{\bsfOmega}{0}{bsfletters}{'012}
\DeclareMathSymbol{\ssfOmega}{0}{ssfletters}{'012}
\newcommand{\balpha}{\bm{\alpha}}
\newcommand{\bgamma}{\bm{\gamma}}
\newcommand{\btheta}{\bm{\theta}}
\newcommand{\bepsilon}{\bm{\epsilon}}
\newtheorem{theorem}{Theorem} 
\newtheorem{lemma}{Lemma}
\newtheorem{definition}{Definition} 
\newcommand{\qednew}{\nobreak \ifvmode \relax \else
      \ifdim\lastskip<1.5em \hskip-\lastskip
      \hskip1.5em plus0em minus0.5em \fi \nobreak
      \vrule height0.75em width0.5em depth0.25em\fi}
\def\blfootnote{\xdef\@thefnmark{}\@footnotetext}
\renewcommand{\paragraph}[1]{\vspace{1.25mm}\noindent\textbf{#1}}
\newcommand{\tablestyle}[2]{\setlength{\tabcolsep}
{#1}\renewcommand{\arraystretch}{#2}\centering\small}
\newlength\savewidth
\def\eg{\emph{e.g.}}
\ificcvfinal\pagestyle{empty}\fi
\begin{document}

\title{\vspace{-10mm}DiffFit: Unlocking Transferability of Large Diffusion Models \\ via  Simple Parameter-Efficient Fine-Tuning \vspace{-5mm}}

\author{
    Enze Xie, 
    Lewei Yao, 
    Han Shi, 
    Zhili Liu,  
    Daquan Zhou, 
    Zhaoqiang Liu, 
    Jiawei Li, 
    Zhenguo Li 
    \\[0.2cm]
    Huawei Noah's Ark Lab
}

\twocolumn[{%
\maketitle
\vspace{-9mm}
\begin{figure}[H]
\hsize=\textwidth
\centering
\includegraphics[width=1.0\textwidth]{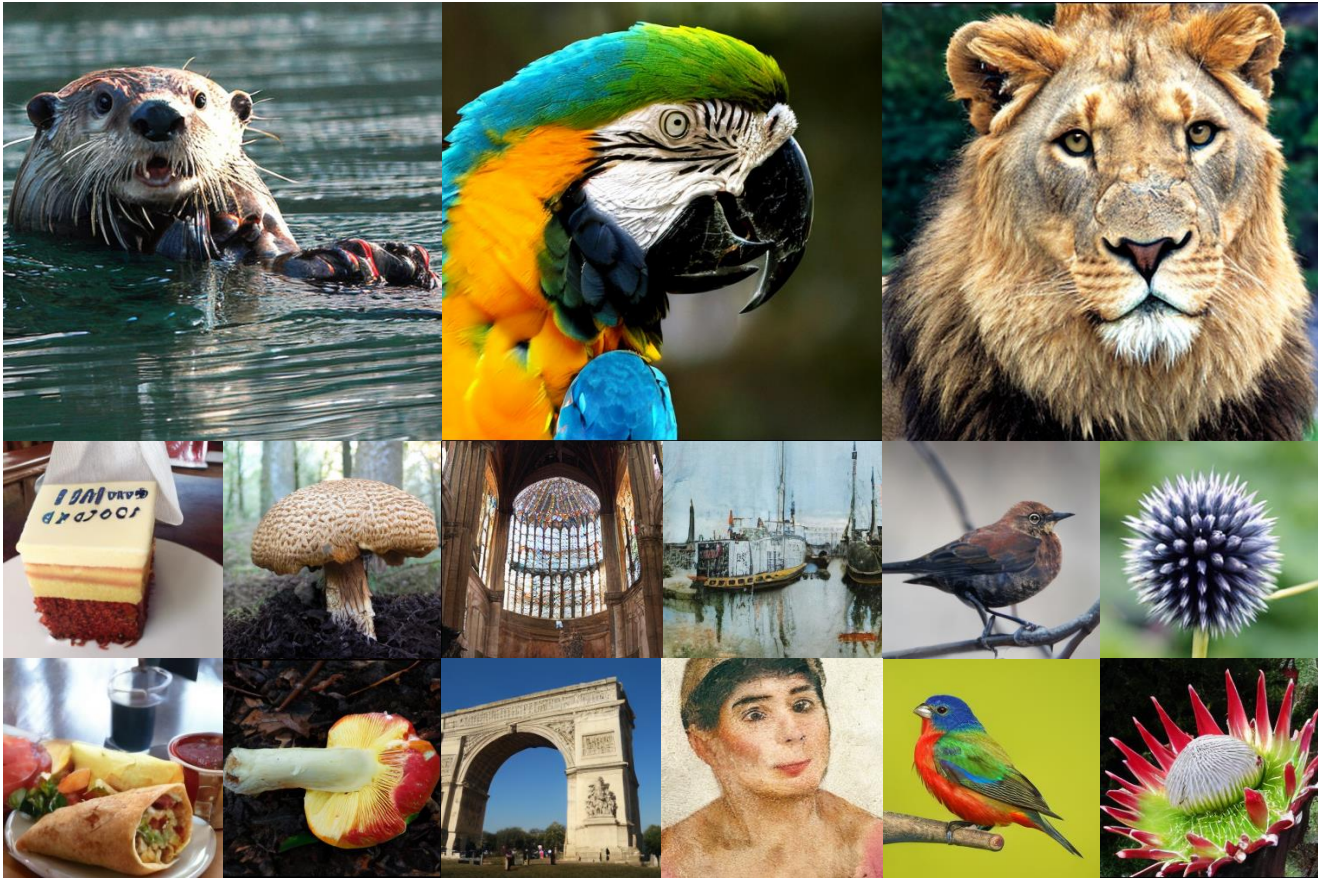}
\vspace{-7mm}
\caption{
Selected samples to show parameter-efficient fine-tuned DiT-XL/2 model using DiffFit. DiffFit only needs to fine-tune 0.12\% parameters. 
\textit{Top row}: 512$\times$512 image generation on ImageNet with \textbf{3.02 FID}. 
\textit{Bottom rows}: 256$\times$256 image generation on several downstream datasets across diverse domains: 
\textcolor{cyan}{Food},
\textcolor{orange}{Fungi},
\textcolor{violet}{Scene},
\textcolor{olive}{Art},
\textcolor{teal}{Bird},
\textcolor{purple}{Flower}.
}
\vspace{-2mm}
\label{fig:vis1}
\end{figure}
}]

\blfootnote{Correspondence to \{xie.enze, li.zhenguo\}@huawei.com}

\begin{abstract}
\vspace{-3mm}
Diffusion models have proven to be highly effective in generating high-quality images. However, adapting large pre-trained diffusion models to new domains remains an open challenge, which is critical for real-world applications.  This paper proposes DiffFit, a parameter-efficient strategy to fine-tune large pre-trained diffusion models that enable fast adaptation to new domains. DiffFit is embarrassingly simple that only fine-tunes the bias term and newly-added scaling factors in specific layers, yet resulting in significant training speed-up and reduced model storage costs. Compared with full fine-tuning, DiffFit achieves  2$\times$ training speed-up and only needs to store approximately 0.12\% of the total model parameters.  Intuitive theoretical analysis has been provided to justify the efficacy of scaling factors on fast adaptation. On 8 downstream datasets, DiffFit achieves superior or competitive performances compared to the full fine-tuning while being more efficient.  Remarkably, we show that DiffFit can adapt a pre-trained low-resolution generative model to a high-resolution one by adding minimal cost. Among diffusion-based methods, DiffFit sets a new state-of-the-art FID of 3.02 on ImageNet 512$\times$512 benchmark by fine-tuning only 25 epochs from a public pre-trained ImageNet 256$\times$256 checkpoint while being 30$\times$ more training efficient than the closest competitor.
\end{abstract}

\section{Introduction}
Denoising diffusion probabilistic models~(DDPMs)~\cite{ho2020denoising,song2021scorebased,song2019generative}
 have recently emerged as a formidable technique for generative modeling and have  demonstrated impressive results in image synthesis~\cite{dalle2,dhariwal2021diffusion,sd},
video generation~\cite{ho2022video,ho2022imagen, zhou2022magicvideo} and 3D editing~\cite{poole2022dreamfusion}.
However, the current state-of-the-art DDPMs suffer from significant computational expenses due to their large parameter sizes and numerous inference steps per image. For example, the recent \emph{DALL$\cdot$ E 2}~\cite{sd} comprises 4 separate diffusion models and requires 5.5B parameters. 
In practice, not all users are able to afford the necessary computational and storage resources. 
As such, there is a pressing need to explore methods for adapting publicly available, large, pre-trained diffusion models to suit specific tasks effectively. In light of this, a central challenge arises: \textit{Can we devise an inexpensive method to fine-tune large pre-trained diffusion models efficiently?}

Take the recent popular Diffusion Transformer (DiT) as an example, the DiT-XL/2 model, which is the largest model in the DiT family and yields state-of-the-art generative performance on the ImageNet class-conditional generation benchmark.
In detail, DiT-XL/2 comprises 640M parameters and involves computationally demanding training procedures. Our estimation indicates that the training process for DiT-XL/2 on 256$\times$256 images necessitates 950 V100 GPU days (7M iterations), whereas the training on 512$\times$512 images requires 1733 V100 GPU days (3M iterations).
The high computational cost makes training DiT from scratch unaffordable for most users.
Furthermore, extensive fine-tuning of the DiT on diverse downstream datasets requires storing multiple copies of the whole model, which results in linear storage expenditures.

In this paper, we propose DiffFit, a simple and parameter-efficient fine-tuning strategy for large diffusion models, building on the DiT as the base model. The motivation can be found in Figure~\ref{fig:diffusion_models}.
Recent work in natural language processing (BitFit~\cite{bitfit}) has demonstrated that fine-tuning only the bias term in a pre-trained model performs sufficiently well on downstream tasks.
We, therefore, seek to extend these efficient fine-tuning techniques to image generative tasks. 
We start with directly applying BitFit~\cite{bitfit} and empirically observe that simply using the BitFit technique is a good baseline for adaptation.
We then introduce learnable scaling factors $\bgamma$ to specific layers of the model, initialized to 1.0, and made dataset-specific to accommodate enhancement of feature scaling and results in better adaptation to new domains.
Interestingly, the empirical findings show that incorporating $\bgamma$ at specific locations of the model is important to reaching a better FID score. In other words, the FID score does not improve linearly with the number of $\bgamma$ included in the model.
In addition, we conducted a theoretical analysis of the mechanism underlying the proposed DiffFit for fine-tuning large diffusion models. 
We provided intuitive theoretical analysis to help understand the effect of the newly-added scaling factors in the shift of distributions. 

\begin{figure}[t!]
    \centering
    \includegraphics[width=0.45\textwidth]{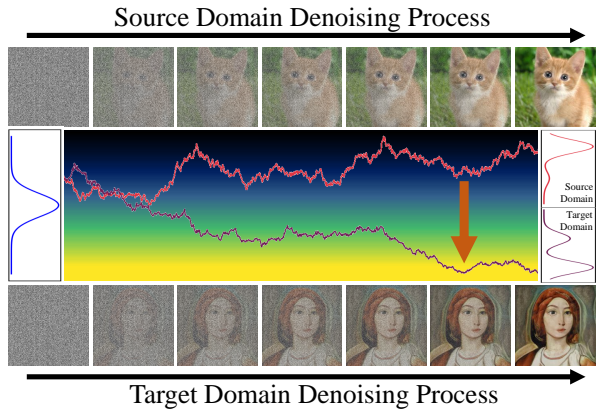}
    \caption{
    The denoising process of a diffusion model typically involves iteratively generating images from random noise.
    In DiffFit, the pre-trained large diffusion model in the source domain can be fine-tuned to adapt to a target domain with only a few specific parameter adjustments.}
    \label{fig:diffusion_models}
\end{figure}

We employed several parameter-efficient fine-tuning techniques, including BitFit~\cite{bitfit}, AdaptFormer~\cite{adaptformer}, LoRA~\cite{lora}, and VPT~\cite{vpt}, and evaluated their performance on 8 downstream datasets. Our results demonstrate that DiffFit outperforms these methods regarding Frechet Inception Distance (FID) \cite{parmar2021cleanfid} trade-off and the number of trainable parameters.
Furthermore, we surprisingly discovered that by treating high-resolution images as a special domain from low-resolution ones, our DiffFit approach could be seamlessly applied to fine-tune a low-resolution diffusion model, enabling it to adapt to high-resolution image generation at a minimal cost.
For example, starting from a pre-trained ImageNet 256$\times$256 checkpoint, by fine-tuning DIT for only 25 epochs~($\approx$0.1M iterations), DiffFit surpassed the previous state-of-the-art diffusion models on the ImageNet 512$\times$512 setting. Even though DiffFit has only about 0.9 million trainable parameters, it outperforms the original DiT-XL/2-512 model (which has 640M trainable parameters and 3M iterations) in terms of FID~(3.02 \textit{vs.} 3.04), while reducing 30$\times$ training time.
In conclusion, DiffFit aims to establish a simple and strong baseline for parameter-efficient fine-tuning in image generation and shed light on the efficient fine-tuning of larger diffusion models.

Our contributions can be summarized as follows:

\begin{enumerate}[itemsep=5pt,topsep=0pt,parsep=0pt]
    \item We propose a simple parameter-efficient fine-tuning approach for diffusion image generation named DiffFit. It achieves superior results compared to full fine-tuning while leveraging only 0.12\% trainable parameters. Quantitative evaluations across 8 downstream datasets demonstrate that DiffFit outperforms existing well-designed fine-tuning strategies (as shown in Figure~\ref{fig:bubbles} and Table~\ref{table: fgvc}).
    \item We conduct an intuitive theoretical analysis and design detailed ablation studies to provide a deeper understanding of why this simple parameter-efficient fine-tuning strategy can fast adapt to new distributions. 
    \item We show that by treating high-resolution image generation as a downstream task of the low-resolution pre-trained generative model, DiffFit can be seamlessly extended to achieve superior generation results with FID 3.02 on ImageNet and reducing training time by 30 times, thereby demonstrating its scalability.

\end{enumerate}

\section{Related Works}

\subsection{Transformers in Vision}
Transformer architecture was first introduced in language model~\cite{vaswani2017attention} and became dominant because of its scalability, powerful performance and emerging ability~\cite{radford2018improving,radford2019language,brown2020language}.
Then, Vision Transformer~(ViT)~\cite{dosovitskiy2020image} and its variants achieved colossal success and gradually replaced ConvNets in various visual recognition tasks, \eg ~image classification~\cite{deit,deepvit,t2tvit,tnt}, object detection~\cite{swin,pvt,pvtv2,detr}, semantic segmentation~\cite{setr,segformer,segmenter} and so on~\cite{transtrack,li2022panoptic,zhao2021point,liu2022video,he2022masked,li2022bevformer}. 
Transformers are also widely adopted in GAN-based generative models~\cite{taming,transgan} and the conditional part of text-to-image diffusion models~\cite{sd,imagen,dalle2,ediffi}.
Recently, DiT~\cite{dit} proposed a plain Transformer architecture for the denoising portion of diffusion networks and verified its scaling properties.
Our paper adopts DiT as a strong baseline and studies parameter-efficient fine-tuning.

\subsection{Diffusion Models}
Diffusion models \cite{ho2020denoising} (aka. score-based models \cite{songscore}) have shown great success in generative tasks, including density estimation \cite{kingma2021variational}, image synthesis \cite{dhariwal2021diffusion}, text-to-image generation \cite{sd,ediffi,sahariaphotorealistic} and so on. Different from previous generative models like GAN \cite{creswell2018generative}, VAE \cite{kingma2019introduction} and Flow \cite{rezende2015variational}, diffusion models \cite{ho2020denoising} transform a data distribution to a Gaussian distribution by progressively adding noise, and then, reversing the process via denoising to retrieve the original distribution. The progressive step-by-step transformation between the two distributions makes the training process of diffusion models more stable compared to other models. However, the multiple time-step generations makes the diffusion process time-consuming and expensive.

\begin{figure}[t]
\includegraphics[width=\linewidth]{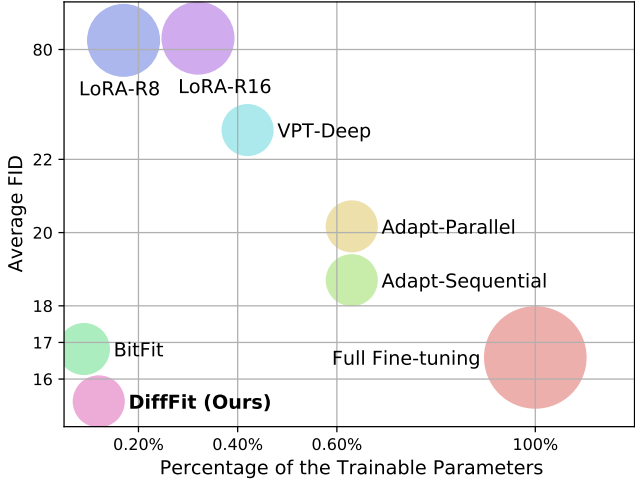}
\caption{
Average FID score of fine-tuned DiT across 8 downstream datasets.
The radius of each bubble reflects the training time~(smaller is better). 
We observe that DiffFit performs remarkably well in terms of achieving the best FID while requiring fewer computations and parameters.
}
\vspace{-4mm}
\label{fig:bubbles}
\end{figure}

\subsection{Parameter-efficient Fine-tuning}
Witnessing the success of Transformers in language and vision, many large models based on Transformer architecture have been developed and pre-trained on massive upstream data. On the one hand, the industry continues to increase the model parameters to billion, even trillion scales~\cite{brown2020language,fedus2021switch,vit22b} to probe up the upper bound of large models. On the other hand, fine-tuning and storage of large models are expensive. 
There are three typical ways for parameter-efficient fine-tuning as follows:

1. Adaptor~\cite{houlsby2019parameter,lora,adaptformer}. Adaptor is a small module inserted between Transformer layers, consisting of a down-projection, a nonlinear activation function, and an up-projection. Specifically, LoRA \cite{lora} adds two low-rank matrices to the query and value results between the self-attention sub-layer. AdaptFormer \cite{adaptformer}, however, places the trainable low-rank matrices after the feed-forward sub-layer.

2. Prompt Tuning~\cite{li2021prefix,lester2021power,logan2021cutting,vpt,zhou2022learning}.
Usually, prefix tuning \cite{li2021prefix} appends some tunable tokens before the input tokens in the self-attention module at each layer. In contrast, prompt-tuning \cite{lester2021power} only appends the tunable tokens in the first layer for simplification. VPT \cite{vpt} focuses on the computer vision field and proposes deep and shallow prompt tuning variants. 

3. Partial Parameter Tuning~\cite{bitfit,xu2021raise,lian2022scaling}. Compared with the above parameter-efficient methods, partial parameter tuning does not insert any other components and only fine-tunes the partial parameters of the original model. For example, BitFit \cite{bitfit} tunes the bias of each linear projection and Child-Tuning \cite{xu2021raise} evaluates the importance of parameters and tunes only the important ones.   

\section{Methodology}

\subsection{Preliminaries}
\paragraph{Diffusion Models.}
Denoising diffusion probabilistic models (DDPMs) \cite{ho2020denoising} define generative models by adding Gaussian noise gradually to data and then reversing back. Given a real data sample $\bx_0 \sim q_{data}(\bx)$, the forward process is controlled by a Markov chain as $q(\bx_t|\bx_{t-1})=\mathcal{N}(\bx_t;\sqrt{1-\beta_t}\bx_{t-1}, \beta_t \bI)$, where $\beta_t$ is a variance schedule between $0$ and $1$. By using the reparameterization trick, we have $\bx_t=\sqrt{\bar{\alpha}_t}\bx_0+\sqrt{1-\bar\alpha_t}\bepsilon$, where $\bepsilon\sim\mathcal{N}(\bm{0}, \bI)$, $\alpha_t = 1 - \beta_t$ and $\bar\alpha_t=\prod_{i=1}^t\alpha_i$. For larger time step $t$, we have smaller $\bar{\alpha}_t$, and the sample gets noisier.

As for the reverse process, DDPM learns a denoise neural network $p_{\btheta}(\bx_{t-1}|\bx_t)=\mathcal{N}(\bx_{t-1};\bm{\mu}_{\btheta}(\bx_t, t), \sigma_t^2 \bI)$. The corresponding objective function is the following variational lower bound of the negative log-likelihood:
\begin{equation}
    \footnotesize
    \mathcal{L}(\btheta)=\sum_t\mathcal{D}_{\mathrm{KL}}\left(q(\bx_{t-1}|\bx_t,\bx_0) \big| p_{\btheta}(\bx_{t-1}|\bx_t)\right)-p_{\btheta}(\bx_0|\bx_1),
\end{equation}
where $\mathcal{D}_{\mathrm{KL}}(p\big|q)$ represents the KL divergence measuring the distance between two distributions $p$ and $q$. Furthermore, the objective function can be reduced to $\mathcal{L}_{vlb}=\mathbb{E}_{\bx_0, \bepsilon, t}\big[\frac{\beta_t^2}{2\alpha_t(1-\bar\alpha_t)\sigma_t^2}\|\bepsilon-\bepsilon_{\btheta}\|^2\big]$ and a simple variant loss function $L_{\mathrm{simple}}=\mathbb{E}_{\bx_0, \bepsilon, t}\big[\|\bepsilon-\bepsilon_{\btheta}\|^2\big]$. Following iDDPM \cite{nichol2021improved}, we use a hybrid loss function as $\mathcal{L}_{hybrid}=\mathcal{L}_{simple} + \lambda\mathcal{L}_{vlb}$, where $\lambda$ is set to be $0.001$ in our experiments.

\paragraph{Diffusion Transformers~(DiT).}
Transformer \cite{vaswani2017attention} architecture has proved to be powerful in image recognition, and its design can be migrated to diffusion models for image generation. DiT~\cite{dit} is a recent representative method that designs a diffusion model with Transformers. 
DiT follows the design of latent diffusion models~(LDMs)~\cite{sd}, which have two parts given a training sample $\bx$: (1) An autoencoder consisting of an encoder $E$ and a decoder $D$, where the latent code $\bz = E(\bx)$ and the reconstructed data $\hat{\bx} = D(\bz)$;
(2) A latent diffusion transformer with patchify, sequential DiT blocks, and depatchify operation. In each block $B_i$, we have $\bz_i=B_i(\bx, t, c)$, where $t$ and $c$ are time embedding and class embedding. Each block $B_i$ contains a self-attention and a feed-forward module. The patchification/depatchification operations are used to encode/decode latent code $\bz$ to/from a sequence of image tokens.

\subsection{Parameter-efficient Fine-tuning}

\begin{figure}[t!]
    \centering
    \includegraphics[width=0.46\textwidth]{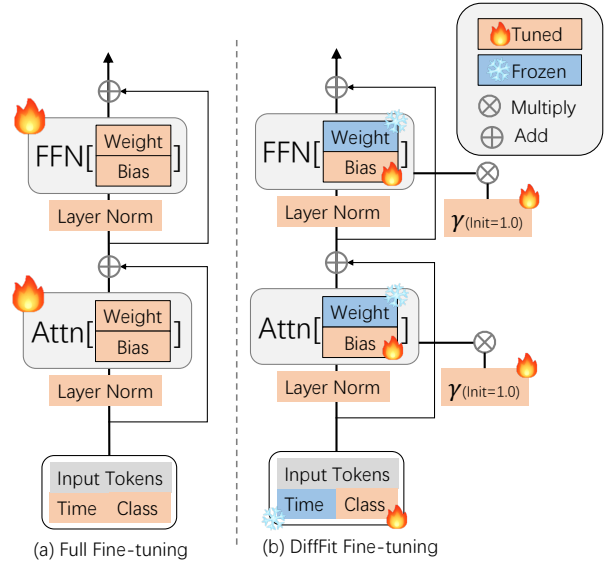}
    \caption{
    Architecture comparison between full fine-tuning and the proposed DiffFit. DiffFit is simple and effective, where most of the weights are frozen and only bias-term, scale factor $\bgamma$, LN, and class embedding are trained. 
    }
    \vspace{-4mm}
    \label{fig:method}
\end{figure}

\paragraph{DiffFit Design.}
This section illustrates the integration of DiT with DiffFit.
Note that DiffFit may be generalized to other diffusion models \eg\ Stable Diffusion. 
Our approach, illustrated in Figure~\ref{fig:method}, involves freezing the majority of parameters in the latent diffusion model and training only the bias term, normalization, and class condition module. 
We moreover insert learnable scale factors $\bgamma$ into several blocks of the diffusion model, wherein the $\bgamma$ is initialized to 1.0 and multiplied on corresponding layers of each block. 
Each block typically contains multiple components such as multi-head self-attention, feed-forward networks, and layer normalization, and the block can be stacked $N$ times.
Please refer to Algorithm~\ref{alg:gamma} for additional detailed information.

\begin{figure*}[t]
\vspace{-5mm}
\centering
\hspace{-1em}
    \begin{minipage}{0.45\linewidth}{
        \centering
        \input{tabs/alg-1.tex}}
    \end{minipage}
\hspace{2em}
    \begin{minipage}{0.47\linewidth}{
        \centering
        \input{tabs/alg-2.tex}}
    \end{minipage}
\end{figure*}

\begin{table*}[ht]
	\centering
	\vspace{-0.3cm}
	\setlength{\tabcolsep}{4pt}
	\scalebox{0.90}{\begin{tabular}{l|c|c|c|c|c|c|c|c|c|c|c}
			\toprule
			\diagbox{Method}{Dataset}&\makecell[c]{~Food~} & ~SUN~ & \makecell[c]{DF-20M}  & \makecell[c]{Caltech}  & \makecell[c]{CUB-Bird} & \makecell[c]{ArtBench} & \makecell[c]{~Oxford~ \\ Flowers} & \makecell[c]{~Standard~ \\ Cars} &  \makecell[c]{Average \\ FID} & \makecell[c]{~Params. \\ (M)} & \makecell[c]{Train \\ Time}  \\
            \midrule
			Full Fine-tuning                     & 10.46 & \textbf{7.96} & \textbf{17.26} & 35.25 & \underline{5.68} & 25.31  & 21.05 & \textbf{9.79} & \underline{16.59} & 673.8~(100\%)  & 1$\times$\\
			Adapt-Parallel~\cite{adaptformer}    & 13.67 & 11.47 & 22.38 & 35.76 & 7.73 & 38.43  & 21.24 & 10.73 & 20.17 & 4.28~(0.63\%) & 0.47$\times$\\
            Adapt-Sequential  & 11.93 & 10.68 & 19.01 & \underline{34.17} & 7.00 & 35.04  & 21.36 & 10.45 & 18.70 & 4.28~(0.63\%) & 0.43$\times$ \\
			BitFit~\cite{bitfit}                 & \underline{9.17} & 9.11 & 17.78 & 34.21 & 8.81 & \underline{24.53}   & \underline{20.31} & 10.64 & 16.82 & \textbf{0.61~(0.09\%)} & 0.45$\times$\\
			VPT-Deep~\cite{vpt}                  & 18.47 & 14.54 & 32.89 & 42.78 & 17.29 & 40.74 & 25.59 & 22.12 & 26.80 & 2.81~(0.42\%) & 0.50$\times$ \\
            LoRA-R8~\cite{lora}                  & 33.75 & 32.53 & 120.25 & 86.05 & 56.03 & 80.99 & 164.13 & 76.24 & 81.25 & 1.15~(0.17\%) & 0.63$\times$\\
            LoRA-R16                 & 34.34 & 32.15 & 121.51 & 86.51 & 58.25 & 80.72 & 161.68 & 75.35 & 81.31 & 2.18~(0.32\%) & 0.68$\times$ \\
            \midrule
            \rowcolor{gray!15}
			DiffFit \textbf{(ours)}              & \textbf{6.96} & \underline{8.55} & \underline{17.35} & \textbf{33.84} & \textbf{5.48} & \textbf{20.87}   & \textbf{20.18} & \underline{9.90} & \textbf{15.39} & \underline{0.83~(0.12\%)} & 0.49$\times$
            \tabularnewline
			\bottomrule
		\end{tabular}
 	}
        \vspace{-0.2cm}
	\caption{FID performance comparisons on 8 downstream datasets with DiT-XL-2 pre-trained on ImageNet 256$\times$256.}
	\label{table: fgvc}
	\vspace{-0.2cm}
\end{table*}

\paragraph{Fine-tuning.}
During fine-tuning, diffusion model parameters are initially frozen, after which only specific parameters related to bias, class embedding, normalization, and scale factor are selectively unfrozen. Our approach, outlined in Algorithm~\ref{alg:train}, enables fast fine-tuning while minimizing disruption to pre-trained weights. DiT-XL/2 requires updating only 0.12\% of its parameters, leading to training times approximately 2$\times$ faster than full fine-tuning. 
Our approach avoids catastrophic forgetting while reinforcing the pre-trained model's knowledge and enabling adaptation to specific tasks.

\paragraph{Inference and Storage.}
After fine-tuning on $K$ datasets, we only need to store one copy of the original model's full parameters and $K \times$ dataset-specific trainable parameters, typically less than 1M for the latter. Combining these weights for the diffusion model enables adaptation to multiple domains for class-conditional image generation.

\subsection{Analysis}

In this subsection, we provide intuitive theoretical justifications for the efficacy of scaling factors and reveal the principle behind their effectiveness. We note that these theoretical justifications are intended as a simple proof of concept rather than seeking to be comprehensive, as our contributions are primarily experimental.

Specifically, recent theoretical works for diffusion models, such as \cite{de2021diffusion,deconvergence,leeconvergence,chen2022sampling,chen2023score}, have shown that under suitable conditions on the data distribution and the assumption of approximately correct score matching, diffusion models can generate samples that approximately follow the data distribution, starting from a standard Gaussian distribution. Given a mapping $\bm{f}$ and a distribution $P$, we denote $\bm{f} \# P$ as a pushforward measure, i.e., for any measurable $\Omega$, we have $(\bm{f} \# P)(\Omega) = P(\bm{f}^{-1}(\Omega))$. Note that our base model DiT is pre-trained on ImageNet with a resolution of $256 \times 256$ and is fine-tuned on downstream datasets with the same resolution but much fewer data points and classes. Motivated by this, if assuming that the data in the ImageNet $256 \times 256$ dataset follows a distribution $Q_0$, then we can assume that the data in the downstream dataset follows a distribution $P_0 = \bm{f}_{\bgamma^*} \# Q_0$, where $\bm{f}_{\bgamma^*}$ is a linear mapping dependent on some ground-truth scaling factors $\bgamma^*$. 

With these assumptions in place, we can formulate an intuitive theorem that provides insight into the effectiveness of scaling factors. A formal version of this theorem is included in the supplementary material.

\begin{theorem}[informal]
\label{thm:difffit_informal}
    Suppose that for a dataset generated from data distribution $Q_0$, we can train a neural network such that the diffusion model generates samples that approximately follow $Q_0$. Further assuming that the data distribution $P_0$ for a relatively small dataset can be written as $P_0 = \bm{f}_{\bgamma^*} \# Q_0$ with $\bm{f}_{\bgamma^*}$ being a linear mapping dependent on ground-truth scaling factors $\bgamma^*$. Then, if only re-training the neural network with the goal of optimizing scaling factors (and all other parameters remain unchanged), under suitable conditions, a simple gradient descent algorithm seeks an estimate $\hat{\bgamma}$ that is close to $\bgamma^*$ with high probability. Furthermore, with the fine-tuned neural network corresponding to $\hat{\bgamma}$, the denoising process produces samples following a distribution that is close to $P_0$. 
\end{theorem}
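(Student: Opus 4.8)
The plan is to decouple the statement into two parts: a finite-sample \emph{optimization} guarantee that gradient descent on the scaling factors recovers $\bgamma^*$, and a \emph{transfer} argument converting closeness in parameter space into closeness of the generated law to $P_0$. The key simplification I would exploit throughout is that, with every weight except $\bgamma$ frozen, the fine-tuning loss is a function of the low-dimensional vector $\bgamma$ alone. First I would set up the population objective $\mathcal{L}(\bgamma) = \mathbb{E}_{\bx_0,\bepsilon,t}\big[\|\bepsilon - \bepsilon_{\btheta}(\bgamma;\bx_t,t)\|^2\big]$ with $\bx_0 \sim P_0$ and all non-$\bgamma$ parameters fixed at their pre-trained values. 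Using the assumed linear structure $P_0 = \bm{f}_{\bgamma^*}\#Q_0$ together with the change-of-variables formula for the score under a linear pushforward, I would show that the pre-trained network, reparameterized by $\bgamma$, represents the score of the noised $P_0$ exactly when $\bgamma = \bgamma^*$, identifying $\bgamma^*$ as the unique minimizer of $\mathcal{L}$ up to the score-matching error inherited from $Q_0$.

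Next I would establish a benign landscape near $\bgamma^*$. Computing the Hessian $\nabla^2\mathcal{L}$ and using the linearity of $\bm{f}_{\bgamma}$ in $\bgamma$ together with a nondegeneracy assumption on the covariance of the relevant features, I would argue that $\mathcal{L}$ is $\mu$-strongly convex and $L$-smooth on a neighborhood of $\bgamma^*$; the standard descent lemma then yields linear convergence of gradient descent to the population minimizer. To move from the population loss to the empirical loss on the $n$ downstream samples, I would invoke a uniform concentration bound over the compact, low-dimensional domain of $\bgamma$ (via covering numbers / Rademacher complexity), giving $\|\hat{\bgamma}-\bgamma^*\| = O(\sqrt{\dim(\bgamma)/n})$ with high probability.

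Finally I would transfer this to the generated distribution. Given $\hat{\bgamma}$ close to $\bgamma^*$ and the Lipschitz dependence of $\bm{f}_{\bgamma}$ on $\bgamma$, a simple coupling bounds $W_1(\bm{f}_{\hat{\bgamma}}\#Q_0,\,P_0) \le C\|\hat{\bgamma}-\bgamma^*\|$. Combining this with the assumed generative guarantee for $Q_0$ and the cited diffusion-model convergence results \cite{chen2022sampling,chen2023score} — which bound the distance between the law of the reverse process and the target in terms of a small score-matching error under Lipschitz and discretization hypotheses — a triangle inequality shows that the fine-tuned denoiser produces samples whose law lies within the desired tolerance of $P_0$.

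I expect the main obstacle to be the benign-landscape step: since $\bepsilon_{\btheta}$ depends on $\bgamma$ only after passing through the frozen nonlinear layers, local strong convexity of $\mathcal{L}(\bgamma)$ is not automatic. Handling it will require either a structural assumption placing the scaling factors where their effect on the output is effectively linear and well-conditioned, or restricting to a neighborhood on which the relevant Jacobian is nondegenerate. A secondary difficulty is preventing the per-step score error from amplifying across the many reverse timesteps, which I would control through the Lipschitz and contraction conditions already packaged into the cited convergence theorems.
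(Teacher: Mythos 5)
Your high-level decomposition (parameter recovery, then transfer to the generated law) matches the paper's, but the technical route is genuinely different, and the step you flag as your ``main obstacle'' is exactly the one the paper never attempts. The paper's formal version works in a far more drastic simplification: one-step generation $G(\bepsilon)=f(\bW\bepsilon+\bb)$ with a single elementwise nonlinearity, and the scaling factors entering \emph{linearly after} $f$, so that the fine-tuning objective collapses to an ordinary least-squares problem $\min_{\bgamma}\|\bA\bgamma-\by\|_2^2$ with random feature matrix $\bA=f(\bE\bW^T+\bB)$ and $\by=(\bA+\bZ)\bgamma^*$. Convexity is then global and free, ``gradient descent finds $\hat{\bgamma}$'' is immediate, and the whole proof consists of (i) a restricted-eigenvalue lower bound $\frac{1}{\sqrt m}\|\bA\bgamma\|_2>\sqrt{V_{\min}/2}\,\|\bgamma\|_2$ obtained by covering the sphere and sub-exponential concentration (this is where $m=\Omega(D^2\log D)$ enters), (ii) the basic inequality $\|\bA\hat{\bgamma}-\by\|_2\le\|\bA\bgamma^*-\by\|_2=\|\bZ\bgamma^*\|_2$, and (iii) a sub-Gaussian bound on $\|\bZ\bgamma^*\|_2$. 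Your plan instead keeps the multi-step score-matching loss and the general nonlinear dependence on $\bgamma$, and proposes to prove local strong convexity of the population loss near $\bgamma^*$.

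Two concrete issues. First, the benign-landscape step is a genuine gap as written: when $\bgamma$ multiplies activations inside a frozen nonlinear network, the loss is not convex in $\bgamma$ in general, and nondegeneracy of the relevant Jacobian on a neighborhood of $\bgamma^*$ is something you would have to assume, not derive; the paper's proof succeeds precisely because its architectural assumption eliminates this difficulty rather than confronting it. If you want to salvage your route you should state that structural assumption up front (e.g., that $\bgamma$ acts linearly on the network output), at which point you have essentially rediscovered the paper's setting. Second, your claimed rate $\|\hat{\bgamma}-\bgamma^*\|_2=O(\sqrt{\dim(\bgamma)/n})$ is not what the model supports: in the paper the residual $\by-\bA\bgamma^*=\bZ\bgamma^*$ encodes the mismatch between $\hat{Q}_0$ and $Q_0$ and does not average out with more samples, so the recovery error has a non-vanishing floor $\frac{4\sqrt{2}\eta}{\sqrt{V_{\min}}}\|\bgamma^*\|_2$; the sample-size requirement only buys the restricted eigenvalue condition with high probability. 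On the transfer step your $W_1$ coupling plus triangle inequality is actually more rigorous than the paper, which settles for the qualitative remark that $\bm{f}_{\hat{\bgamma}}\#\hat{Q}_0$ is ``naturally considered to be close'' to $\bm{f}_{\bgamma^*}\#Q_0$ --- though your version needs the Lipschitz and discretization hypotheses of the cited convergence theorems, which the paper's one-step-generation assumption lets it avoid entirely.
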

In summary, Theorem~\ref{thm:difffit_informal} essentially states that when distributions $Q_0$ and $P_0$ satisfy the condition that $P_0 = \bm{f}_{\bgamma^*} \# Q_0$ with $\bm{f}_{\bgamma^*}$ being dependent on ground-truth scaling factors $\bgamma^*$, diffusion models can transfer from distribution $Q_0$ to $P_0$ in the denoising process with fine-tuning the scaling factors in the training process.
\section{Experiments}
\begin{figure*}[t]
\hsize=\textwidth
\vspace{-5mm}
\centering
\includegraphics[width=1.0\textwidth]{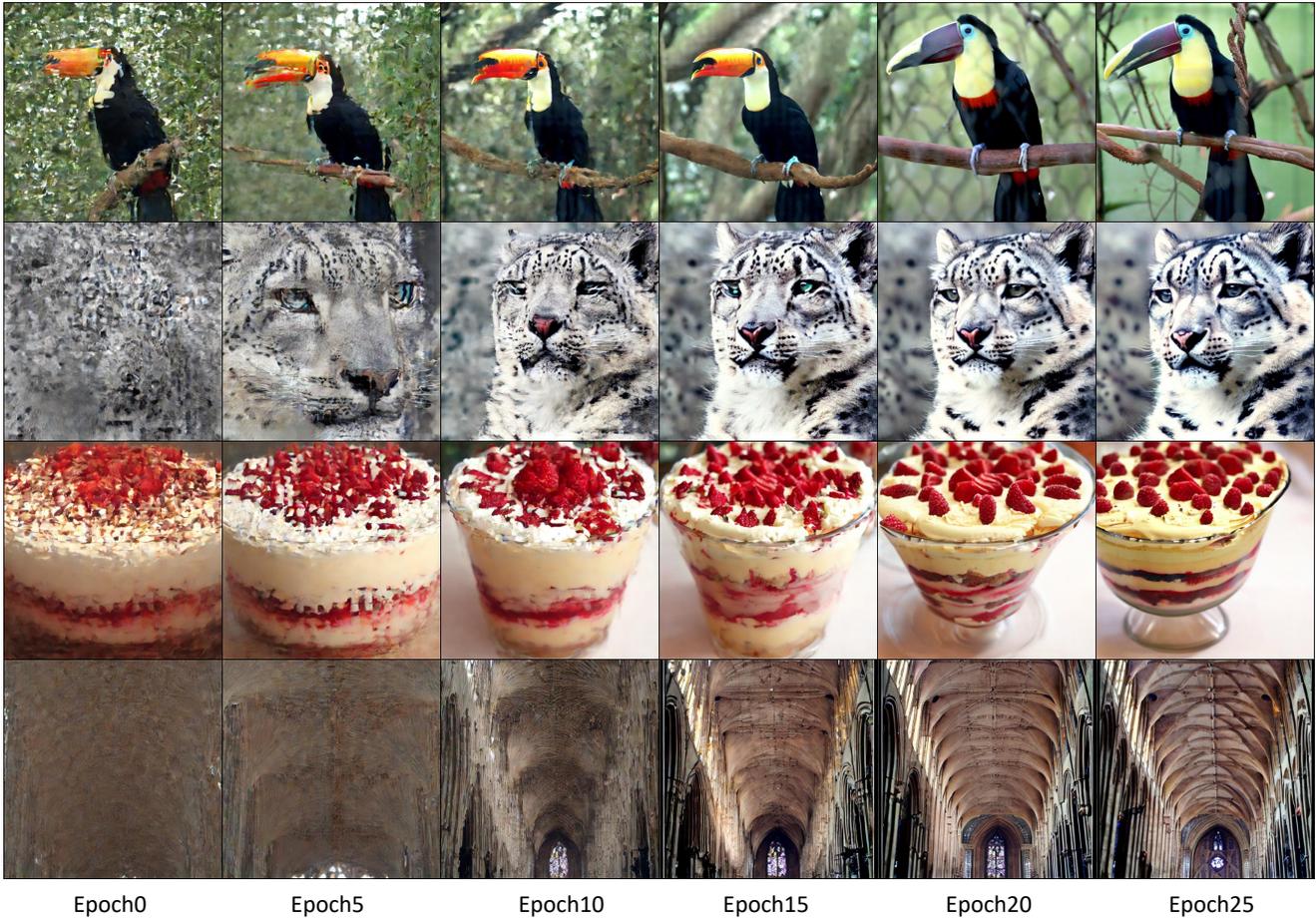}
\caption{
Fine-tune DiT-XL/2-512 from the checkpoint of DiT-XL/2-256 using DiffFit with the positional encoding trick.
}
\label{fig:learning process}
\end{figure*}

\subsection{Implementation Details}
Our base model is the DiT\footnote{https://github.com/facebookresearch/DiT}, which is pre-trained on ImageNet 256$\times$256 with 7 million iterations, achieving an FID score of 2.27\footnote{https://dl.fbaipublicfiles.com/DiT/models/DiT-XL-2-256x256.pt}.
However, since the original DiT repository does not provide training code, we re-implemented this and achieved reasonable results. 
Following DiT, we set the constant lr=1e-4 for full fine-tuning and set the classifier-free guidance to 1.5 for evaluation and 4.0 for visualization.
In addition, we re-implemented several parameter-efficient fine-tuning methods such as Adaptor, BitFit, Visual Prompt Tuning (VPT), and LoRA.
We found VPT to be sensitive to depth and token number while training was exceptionally unstable. As such, we sought for a more stable configuration with depth=5, token=1, and kept the final layers unfrozen for all tasks. 
We enlarge lr $\times$10 for parameter-efficient fine-tuning settings to obtain better results following previous works~\cite{lora,adaptformer}.

\subsection{Transfer to Downstream Datasets} \label{sec:downstream}
\paragraph{Setup.} For fine-tuning downstream small datasets with 256$\times$256 resolutions, we use 8 V100 GPUs with a total batch size of 256 and train 24K iterations.
We choose 8 commonly used fine-grained datasets: Food101, SUN397, DF-20M mini, Caltech101, CUB-200-2011, ArtBench-10, Oxford Flowers and Stanford Cars. 
We report FID using 50 sampling steps for all the tasks.
Most of these datasets are selected from CLIP downstream tasks except ArtBench-10 since  it has distinct distribution from ImageNet, which enables a more comprehensive evaluation of the out-of-distribtion generalization performance of our DiffFit.

\paragraph{Results.} 
We list the performance of different parameter-efficient fine-tuning methods in Table~\ref{table: fgvc}. 
As can be seen, by tuning only $0.12\%$ parameters, our DiffFit achieves the lowest FID on average over $8$ downstream tasks. 
While full fine-tuning is a strong baseline and has slightly better results on 3/8 datasets, it is necessary to fine-tune 100\% parameters.
Among all baselines, the performance of LoRA is surprisingly poor. As discussed in \cite{he2022parameter}, LoRA performs worse on image classification tasks than other parameter-efficient fine-tuning methods. As the image generation task is conceptually more challenging than the image classification task, it is reasonable that the performance gap between LoRA and other approaches becomes larger here.

\begin{table}[t]
\centering
\begin{tabular}{l|cc}
\toprule
Method & FID $\downarrow$ & \makecell{Training Cost \\ (GPU days)} $\downarrow$ \\
\midrule
BigGAN-Deep~\cite{brock2018large} & 8.43 &  256-512  \\
StyleGAN-XL~\cite{sauer2022stylegan} & 2.41 & 400 \\ 
\midrule
ADM-G, AMD-U~\cite{dhariwal2021diffusion} & 3.85 &  1914 \\
DiT-XL/2~\cite{dit} & 3.04 &  1733  \\
\midrule
\rowcolor{gray!10}
DiffFit \textbf{(ours)} & \textbf{3.02} & \textbf{51} \small\color{gray}{(+950$^\dag$)}  \\

\bottomrule
\end{tabular}
\vspace{-.5em}
\caption{\textbf{Class-conditional image generation on ImageNet 512$\times$512.} The training cost of the original DiT model and our method is measured on V100 GPU devices, and other methods are quoted from original papers. $\dag$: 950 GPU days indicates the pre-training time of DiT-XL/2 model on ImageNet 256$\times$256 with 7M steps. } 
\label{tab:IN512_performance} 
\end{table}

\begin{table}[t]
\centering
\begin{tabular}{l|cc}
\toprule
Method &  \makecell{Pre-trained \\ Checkpoint} & FID $\downarrow$  \\
\midrule
Full Fine-tune & IN256(7M)$\rightarrow$Food256 &  23.08 \\
\midrule
DiffFit & IN512~(3M) &  19.25  \\
\rowcolor{gray!10}
DiffFit & IN256~(7M)$\rightarrow$Food256 &  19.50  \\
\rowcolor{gray!20}
~~+PosEnc Trick & IN256~(7M)$\rightarrow$Food256 &  \textbf{19.10}  \\
\bottomrule
\end{tabular}
\vspace{-.5em}
\caption{\textbf{Class-conditional image generation on Food-101 512$\times$512.} 
We use two pre-trained models from: (1) ImageNet 512$\times$512, and (2)  ImageNet 256$\times$256 and firstly fine-tuned on Food-101 256$\times$256.} 
\label{tab:Food512_performance} 
\end{table}

\begin{table*}[t]
\centering
\hspace{-1em}

\subfloat[
    \textbf{Scale: Deep$\rightarrow$Shallow}. 
    \label{tab:d2s}
]{
    \centering
    \begin{minipage}{0.20\linewidth}{
        \begin{center}
            \tablestyle{4pt}{1.05}
            \begin{tabular}{l|cc}
            \toprule
            Blocks & \#params~(M) & FID $\downarrow$ \\
            \midrule
            28$\rightarrow$25 & 0.747 &  10.04  \\
            28$\rightarrow$22 & 0.754 &  10.03 \\ 
            28$\rightarrow$18 & 0.763 &  10.33 \\
            28$\rightarrow$14 & 0.770 &  10.51  \\
            28$\rightarrow$11 & 0.779 &  9.92  \\
            28$\rightarrow$8 & 0.786 &   9.28  \\
            28$\rightarrow$4 & 0.796 &   8.87  \\
            \rowcolor{gray!15}
            28$\rightarrow$1 & 0.803 &   \textbf{8.19} \\
            \bottomrule
            \end{tabular}
        \end{center}}
    \end{minipage}
}
\hspace{2em}
\subfloat[
    \textbf{Scale: Shallow$\rightarrow$Deep}.
    \label{tab:s2d}
]{
    \begin{minipage}{0.20\linewidth}{
        \begin{center}
            \tablestyle{4pt}{1.05}
            \begin{tabular}{l|cc}
            \toprule
            Blocks & \#params~(M) & FID $\downarrow$ \\
            \midrule
            1$\rightarrow$3 & 0.745 &  8.29  \\
            1$\rightarrow$7 & 0.754 &  7.99 \\ 
            1$\rightarrow$11 & 0.763 &  7.72 \\
            \rowcolor{gray!15}
            1$\rightarrow$14 & 0.770 &  \textbf{7.61}  \\
            1$\rightarrow$18 & 0.779 &  7.63  \\
            1$\rightarrow$21 & 0.786 &  7.67  \\
            1$\rightarrow$25 & 0.796 &  7.85  \\
            1$\rightarrow$28 & 0.803 &  8.19  \\
            \bottomrule
            \end{tabular}
        \end{center}}
    \end{minipage}
}
\hspace{2em}
\subfloat[
    \textbf{Scale Location}.
    \label{tab:scale_location}
]{
    \begin{minipage}{0.20\linewidth}{
        \begin{center}
            \tablestyle{2pt}{1.05} 
            \begin{tabular}{c|l|cc}
            \toprule
            \#ID& Scale Factor & FID $\downarrow$ \\
            \midrule
            1& NA~(BitFit)  &  9.17  \\
            2& +Blocks &   8.19  \\
            3& \cellcolor{red!10}~~+PatchEmb  &  \cellcolor{red!10}9.05  \\
            4& \cellcolor{red!10}~~+TimeEmb &  \cellcolor{red!10}8.46  \\
            5& \cellcolor{green!10}~~+QKV-Linear  &  \cellcolor{green!10}7.37  \\
            6& \cellcolor{green!10}~~+Final Layer  &  \cellcolor{green!10}7.49  \\
            7 & +ID 1, 2, 5, 6  &  7.17  \\
            8 & \cellcolor{gray!15}~~+(1$\rightarrow$14 Layers)  &  \cellcolor{gray!15}\textbf{6.96}  \\
            \bottomrule
            \end{tabular}
        \end{center}}
    \end{minipage}
}
\hspace{2em}
\subfloat[
    \textbf{Learning Rate}.
    \label{tab:lr}
]{
    \begin{minipage}{0.15\linewidth}{
        \begin{center}
            \tablestyle{4pt}{1.05}
            \begin{tabular}{cc}
            \toprule
            LR Ratio & FID $\downarrow$ \\
            \midrule
            0.1$\times$ &  25.85  \\
            0.2$\times$ &  21.42  \\
            0.5$\times$ &  17.16  \\
            1$\times$ &  15.68  \\
            2$\times$ &  13.84  \\
            5$\times$ &  10.97  \\
            \rowcolor{gray!15}
            10$\times$ &  \textbf{8.19}  \\
            20$\times$ &  8.30 \\
            \bottomrule
            \end{tabular}
        \end{center}}
    \end{minipage}
}
\vspace{-0.2cm}
\caption{\textbf{Ablation experiments on Food101 dataset with DiT-XL/2.}
\colorbox{red!15}{Red} means adding scale factor leads to negative results and  \colorbox{green!15}{green} means positive. 
The best setting are marked in \colorbox{gray!15}{gray}.}
\label{tab:ablations}
\end{table*}

\subsection{From Low Resolution to High Resolution}
\paragraph{Setup.} 
Considering the generating  images with different resolutions  as a special type of distribution shift, 
our proposed method can effortlessly adapt a pre-trained low-resolution diffusion model to generate high-resolution images.
To demonstrate the effectiveness of DiffFit, we load a pre-trained ImageNet $256\times256$ DiT-XL/2 checkpoint and fine-tune the model on the ImageNet $512\times512$.
We employ a positional encoding trick to speed up fine-tuning.
We fine-tune DiT-XL/2 on ImageNet $512\times512$ using 32 V100 GPUs with 1024 batch size and 30K iterations.
We report FID using 250 sampling steps.
Note that we do not need to fine-tune the label embedding here since the label does not change.

\paragraph{Positional Encoding Trick.} 
DiT~\cite{dit} adopts a static sinusoidal 2D positional encoding scheme. To better utilize the positional information encoded in the pre-trained model, we develop a sinusoidal interpolation that aligns the positional encoding of 512$\times$512 resolution with that of 256$\times$256 resolution. 
This is implemented by replacing each pixel coordinate $(i,j)$ in the positional encoding formula with its half value $(i/2,j/2)$, which is simple and  have no extra costs.

\begin{figure*}[t]
\centering

\scalebox{1}{
\includegraphics[width=\linewidth]{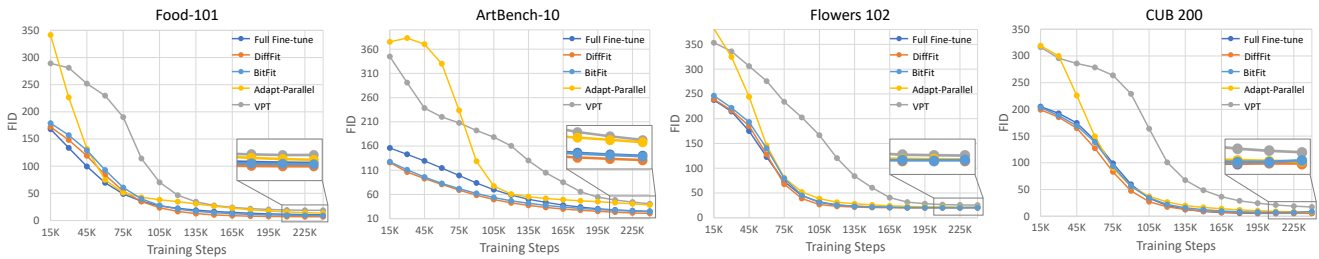}
}
\caption{
\textbf{FID of five methods every 15K iterations on four downstream datasets.}
Our observations indicate that DiffFit  can rapidly adapt to the target domain while maintaining a robust FID score.
}
\vspace{-4mm}
\label{fig:curev1}
\end{figure*}

\paragraph{Results.} 
As demonstrated in Table~\ref{tab:IN512_performance}, 
DiffFit achieves 3.02 FID on the ImageNet $512\times512$ benchmark, surpassing ADM's 3.84 and the official DiT's 3.04. 
DiffFit sets a new state-of-the-art among diffusion-based methods.
Importantly, our method is significantly more efficient than the previous methods, as the fine-tuning process only requires an overhead of 51 GPU days. Even when accounting for the 950 GPU days of pre-training, our method remains superior to the 1500+ GPU days required by DiT and ADM. 
We observe faster training convergence using the positional encoding trick, as shown in Figure~\ref{fig:learning process}. For more visualizations please see Figure~\ref{fig:vis1} and appendix.

In Table~\ref{tab:Food512_performance}, we conducted fine-tuning experiments on the Food101 dataset with a resolution of 512$\times$512 using the DiffFit method. Our results reveal that fine-tuning a pre-trained Food101 256$\times$256 checkpoint with DiffFit yields a FID of 4 improvements over full fine-tuning. 
Interestingly, we found that utilizing a pre-trained ImageNet 512$\times$512 checkpoint leads to a FID performance similar to that achieved by the pre-trained Food101 with 256$\times$256 resolution. Moreover, we observed a slight improvement in FID performance by incorporating the proposed positional encoding trick into the fine-tuning process.

\subsection{Fine-tuning Convergence Analysis}
To facilitate the analysis of converging speed, we present the FID scores for several methods every 15,000 iterations in the Food-101, ArtBench-10, Flowers-102 and CUB-200 datasets, as shown in Figure~\ref{fig:curev1}. 
Our observations demonstrate that full fine-tuning, BitFit, and our proposed DiffFit exhibit similar convergence rates, which surpass the initial performance of AdaptFormer and VPT. 
While AdaptFormer initially presents inferior performance, it shows a rapid improvement in the middle of training. In contrast, VPT exhibits the slowest rate of convergence. 
Compared to full fine-tuning, DiffFit freezes most of the parameters and thus maximally preserves the information learned during the pre-training and thus achieves a better fine-tuned performance. 
Compared to BifFit, DiffFit adjusts the feature using scale factor, resulting in faster convergence and better results.

The above observation and analysis verify that our proposed DiffFit demonstrates fast adaptation abilities to target domains and an excellent image generation capability.

\subsection{Ablation studies}
\paragraph{Setup.}
We conduct ablation studies on Food101 dataset, which has 101 classes. Each class has 750/250 images in the train/test set. Other settings are the same as Section~\ref{sec:downstream}.

\paragraph{Scale factor $\bgamma$ in different layers.} 
We investigate the effect of the scaling factor $\bgamma$ via 2 designs: gradually adding $\bgamma$  from deep to shallow~(Table~\ref{tab:d2s}) and from shallow to deep~(Table~\ref{tab:s2d}).
The results demonstrate that adding $\bgamma$ before the 14th layer of DiT-XL/2 gradually increases the performance from 8.29 to 7.61 FID while adding more $\bgamma$ in the deeper layers hurts the performance. 
We hypothesize that deeper layers are responsible for learning high-level features that capture abstract data patterns, which contribute to synthesizing the final output and are often complex and non-linear. Adding $\bgamma$ in deeper layers poses a risk of disrupting the learned correlations between the high-level features and data, which might negatively impact the model.

\paragraph{Scale factor $\bgamma$ in different modules.} 
We study the impact of scaling factor $\bgamma$ in various DiT modules, as illustrated in Table~\ref{tab:scale_location}. 
Based on BitFit~\cite{bitfit}'s FID score of 9.17, incorporating the scaling factor $\bgamma$ in transformer blocks and QKV-linear layers of self-attention can significantly enhance the performance, resulting in a FID score of 7.37 (row 5). However, introducing scale factors in other modules, such as patch embedding (row 3) and time embedding (row 4), does not bring noticeable improvements. By integrating the effective designs, i.e., adding $\bgamma$ in blocks, QKV-linear and final layers, we improve FID score to 7.17 (row 7). We further improve the FID score of 6.96 by placing $\bgamma$ in the first 14 blocks (Table \ref{tab:s2d}). This optimal setting is adopted as the final setting for our approach.

\paragraph{Learning rate.} 
Adjusting the learning rate is a crucial step in fine-tuning. 
Parameter-efficient fine-tuning typically requires a larger learning rate than the pre-training~\cite{lora,adaptformer} since pre-training has already initialized most of the model's  parameters to a certain extent and a larger learning rate can help  quickly adapt the remaining parameters to the new tasks. 
We perform a learning rate search on our method, as shown in Table~\ref{tab:lr}.
We observe that using a learning rate 10$\times$ greater than pre-training yields the best result. Larger learning rates than 10$\times$ resulted in decreased performance and even unstable training.

\section{Conclusions and Limitations}
In this paper, we propose DiffFit, a straightforward yet effective  fine-tuning approach that can quickly adapt a large pre-trained diffusion model to various downstream domains, including different datasets or varying resolutions. By only fine-tuning bias terms and scaling factors, DiffFit provides a cost-effective solution to reduce storage requirements and speed up fine-tuning without compromising performance. 
One limitation is that our experiments mainly focus on class-conditioned image generation. It is still unclear whether this strategy could perform equally well in more complex tasks such as text-to-image generation or video/3D generation. We leave these areas for future research. 

\paragraph{Acknowledgement.} We would like to express our gratitude to Junsong Chen, Chongjian Ge, and Jincheng Yu for their assistance with experiments on LLaMA and DreamBooth.

\newpage
\onecolumn

{\centering
    {\huge \bf Supplementary Material}
    \\
    {\Large \bf DiffFit: Unlocking Transferability of Large Diffusion Models \\ via  Simple Parameter-efficient Fine-Tuning \\
     } 
}

\section{More Applications}
\subsection{Combine DiffFit with ControlNet}
\paragraph{Setup.} 
To verify that DiffFit can be applied to large-scale text-to-image generation models, we conducted experiments on the recent popular ControlNet~\cite{controlnet} model. We used the official code of ControlNet\footnote{\url{https://github.com/lllyasviel/ControlNet}},
and chose the semantic segmentation mask as the additional condition. 
We used the COCO-Stuff~\cite{cocostuff} as our dataset and the text prompt was generated by the BLIP~\cite{blip}. 
The model was trained on the training set and evaluated on the validation set using 8 V100 GPUs with a batch size of 2 per GPU and 20 epochs. Unless otherwise stated, we followed the official implementation of ControlNet for all hyper-parameters including the learning rate and the resolution.

\paragraph{Combination.} 
ControlNet~\cite{controlnet} enhances the original Stable Diffusion (SD) by incorporating a conditioning network comprising two parts: a set of zero convolutional layers initialized to zero and a trainable copy of 13 layers of SD initialized from a pre-trained model.

In our DiffFit setting, we freeze the entire trainable copy part and introduce a scale factor of $\gamma$ in each layer. Then, we unfreeze the $\gamma$ and all the bias terms. 
Note that in ControlNet, the zero convolutional layers are required to be trainable. 
Table~\ref{tab:ControlNet} shows that DiffFit has 11.2M trainable parameters while the zero convolutional layers contribute 10.8M parameters. Reducing the trainable parameters in zero convolutional layers is plausible, but it is beyond the scope of this paper.

\paragraph{Results.}
Table~\ref{tab:ControlNet} displays the results obtained from the original ControlNet and the results from combining DiffFit with ControlNet. Our results show that the addition of DiffFit leads to comparable FID and CLIP scores while significantly reducing the number of trainable parameters. 
In addition, we note that ControlNet primarily focuses on fine-grained controllable generation while FID and CLIP score may not accurately reflect the overall performance of a model.
Figures~\ref{fig:controlnet-1} and~\ref{fig:controlnet-2} compare the results from ControlNet and DiffFit. When using the same mask and text prompt as conditions, fine-tuning with DiffFit produces more visually appealing results compared to fine-tuning with the original ControlNet.

Our experimental results suggest that DiffFit has great potential to improve the training of other types of advanced generative models.

\subsection{Combine DiffFit with DreamBooth}
\paragraph{Setup.} Our DiffFit can also be easily combined with the DreamBooth~\cite{dreambooth}. DreamBooth is a personalized text-to-image diffusion model that fine-tunes a pre-trained text-to-image model using a few reference images of a specific subject, allowing it to synthesize fully-novel photorealistic images of the subject contextualized in different scenes. 
Our implementation on DreamBooth utilizes the codebase from Diffusers\footnote{\url{https://github.com/huggingface/diffusers/tree/main/examples/dreambooth}}, with Stable Diffusion as the base text-to-image model.

\paragraph{Implementation detail.} 
Given 3-5 input images, the original DreamBooth fine-tunes the subject embedding and the entire text-to-image (T2I) model. In contrast, our DiffFit simplifies the fine-tuning process by incorporating scale factor $\gamma$ into all attention layers of the model and requiring only the $\gamma$, bias term, LN, and subject embedding to be fine-tuned. 

\paragraph{Results.}
We compare the original full-finetuning approach of DreamBooth with the fine-tuning using LoRA and DiffFit, as shown in Figures~\ref{fig:dreambooth-1} and \ref{fig:dreambooth-2}.
First, we observe that all three methods produce similar generation quality. 
Second, we find that the original full-finetuning approach is storage-intensive, requiring the storage of 859M parameters for one subject-driven fine-tuning. In contrast, LoRA and DiffFit are both parameter-efficient (requiring the storage of less than 1\% parameters), with DiffFit further reducing trainable parameters by about 26\% compared to LoRA.

\subsection{Combine DiffFit with LLaMA and Alpaca}

\paragraph{Setup.} 
LLaMA~\cite{llama} is an open and efficient large language model~(LLM) from Meta, ranging from 7B to 65B parameters. 
Alpaca is a model fully fine-tuned from the LLaMA 7B model on 52K instruction-following demonstrations and behaves qualitatively similarly to OpenAI’s ChatGPT model \texttt{text-davinci-003}. 

\paragraph{Combination.} 
We use the original code from Alpaca\footnote{\url{https://github.com/tatsu-lab/stanford_alpaca}} as the baseline and compare it with Alpaca-LoRA\footnote{\url{https://github.com/tloen/alpaca-lora}}. Since LLaMA and DiT have a similar Transformer design, the fine-tuning strategy of LLaMA-DiffFit is exactly the same as that of DiT-DiffFit, except that the learning rate is set to 3e-4 to align with Alpaca-LoRA.

\paragraph{Results.}
In Table \ref{tab:instruction_comparisons_short}, we showcase the instruction tuning results on llama using Alpaca, LoRA, and DiffFit. 
The trainable parameters of Alpaca, LoRA, and DiffFit are \textbf{\textit{7B, 6.7M, and 0.7M}}.
The model fine-tuned with DiffFit exhibits strong performance in natural language question-answering tasks (e.g., common sense questioning, programming, etc.). This validates the efficacy of DiffFit for instruction tuning in NLP models, highlighting DiffFit as a flexible and general fine-tuning approach. 

\begin{table}[h]
\centering
\begin{tabular}{l|cccc}
\toprule
Method & FID $\downarrow$ & CLIP Score $\uparrow$ & Total Params~(M) & Trainable Params~(M)\\
\midrule
ControlNet & 20.1 & 0.3067 & 1343 & 361 \\ 
ControlNet + DiffFit & 19.5 &  0.3064  & 1343 & 11.2\\
\bottomrule
\end{tabular}
\vspace{-.5em}
\caption{Results of original ControlNet and combined DiffFit on COCO-Stuff dataset.} 
\label{tab:ControlNet} 
\end{table}

\begin{figure*}[h]
    \centering
    \includegraphics[width=1.0\textwidth]{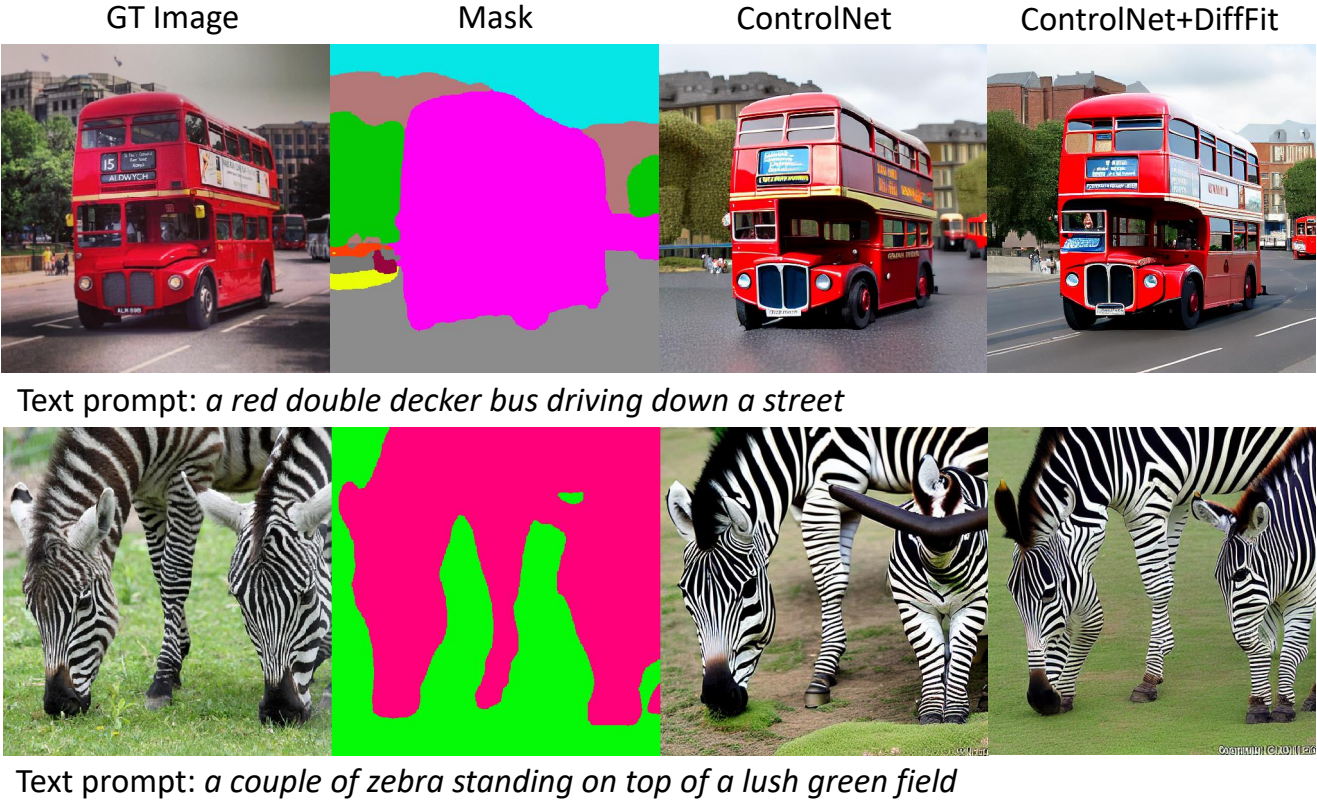}
    \caption{
    Visualization of original ControlNet and combined DiffFit on COCO-Stuff dataset. 
    }
    \label{fig:controlnet-1}
\end{figure*}

\begin{figure*}[h]
    \centering
    \includegraphics[width=1.0\textwidth]{figs/controlnet_2.pdf}
    \caption{
    Visualization of original ControlNet and combined DiffFit on COCO-Stuff dataset. 
    }
    \label{fig:controlnet-2}
\end{figure*}

\begin{figure*}[h]
    \centering
    \includegraphics[width=1.0\textwidth]{figs/dreambooth1.pdf}
    \caption{
    Visualization of original DreamBooth, with LoRA and DiffFit. 
    }
    \label{fig:dreambooth-1}
\end{figure*}

\begin{figure*}[h]
    \centering
    \includegraphics[width=1.0\textwidth]{figs/dreambooth2.pdf}
    \caption{
    Visualization of original DreamBooth, with LoRA and DiffFit. 
    }
    \label{fig:dreambooth-2}
\end{figure*}

\begin{table}[h]
\centering
\footnotesize
\begin{tabular}{p{1.0\columnwidth}}
\definecolor{codeblue}{rgb}{0.25,0.5,0.5}
\definecolor{codekw}{rgb}{0.85, 0.18, 0.50}
\lstset{
  backgroundcolor=\color{lightgray},
  basicstyle=\fontsize{7.5pt}{7.5pt}\ttfamily\selectfont,
  columns=fullflexible,
  breaklines=true,
  captionpos=b,
  commentstyle=\fontsize{7.5pt}{7.5pt}\color{codeblue},
  keywordstyle=\fontsize{7.5pt}{7.5pt}\color{codekw},
  escapechar={|}, 
}
\begin{framed}
\centering
 \begin{minipage}{\textwidth}
{\bf Instruction:} Tell me about alpacas. \\
{\bf Alpaca:} Alpacas are small, fluffy animals related to camels and llamas. They are native to Peru and Bolivia, and were first domesticated around 5,000 years ago. They are kept mainly for their fine, soft fleece, which is used to make knitwear and other garments. Alpacas are herd animals and live in small family groups, led by an older male. They are highly social animals and can form strong bonds with their owners. \\
{\bf Alpaca-LoRA:}  Alpacas are members of the camelid family and are native to the Andes Mountains of South America. They are known for their soft, luxurious fleece, which is used to make clothing, blankets, and other items. Alpacas are herbivores and graze on grasses and other plants. They are social animals and live in herds of up to 20 individuals. \\
{\bf LLaMA-DiffFit:} Alpacas are members of the camelid family and are native to the Andes Mountains in South America. They are domesticated for their soft, luxurious fleece, which is used to make clothing, blankets, and other textiles. Alpacas are herbivores and graze on grasses and other plants. They can live up to 20 years in captivity and are known for their gentle nature. 
\end{minipage}
\end{framed}
\vspace{-5mm}
\begin{framed}
\centering
 \begin{minipage}{\textwidth}
{\bf Instruction:} List all Canadian provinces in alphabetical order. \\
{\bf Alpaca:} Alberta, British Columbia, Manitoba, New Brunswick, Newfoundland and Labrador, Nova Scotia, Ontario, Prince Edward Island, Quebec, Saskatchewan, Northwest Territories, Nunavut, Yukon. \\
{\bf Alpaca-LoRA:} Alberta, British Columbia, Manitoba, New Brunswick, Newfoundland and Labrador, Nova Scotia, Ontario, Prince Edward Island, Quebec, Saskatchewan. \\
{\bf LLaMA-DiffFit:} Alberta, British Columbia, Manitoba, New Brunswick, Newfoundland and Labrador, Northwest Territories, Nova Scotia, Nunavut, Ontario, Prince Edward Island, Quebec, Saskatchewan, Yukon.
\end{minipage}
\end{framed}
\vspace{-5mm}
\begin{framed}
\centering
 \begin{minipage}{\textwidth}
{\bf Instruction:} Tell me five words that rhyme with 'shock'. \\
{\bf Alpaca:} Five words that rhyme with shock are: rock, pop, shock, cook, and snock. \\
{\bf Alpaca-LoRA:} Flock, lock, rock, stock, and sock. \\
{\bf LLaMA-DiffFit:}  Shock, sock, rock, cook, and book.
\end{minipage}
\end{framed}
\vspace{-5mm}
\begin{framed}
\centering
 \begin{minipage}{\textwidth}
{\bf Instruction:} Write a Python program that prints the first 10 Fibonacci numbers. \\
{\bf Alpaca:} 
\begin{lstlisting}[language=python]
numbers = [0, 1]

for i in range(2, 11):
    numbers.append(numbers[i-2] + numbers[i-1])

print(numbers)
# Output: [0, 1, 1, 2, 3, 5, 8, 13, 21, 34]
\end{lstlisting}
{\bf Alpaca-LoRA:}
\begin{lstlisting}[language=python]
def fibonacci(n):
    if n == 0:
        return 0
    elif n == 1:
        return 1
    else:
        return fibonacci(n-1) + fibonacci(n-2)

print(fibonacci(10))
\end{lstlisting}
{\bf LLaMA-DiffFit:}
\begin{lstlisting}[language=python]
def fib(n):
    if n == 0:
        return 0
    elif n == 1:
        return 1
    else:
        return fib(n-1) + fib(n-2)

print(fib(10))
\end{lstlisting}
\end{minipage}
\end{framed}
\end{tabular}
\vspace{-5mm}
\caption{\textbf{Instruction-following Comparison} between Alpaca, Alpaca-LoRA and LLaMA-DiffFit.}
\label{tab:instruction_comparisons_short}
\end{table}

\section{More Implementation Details}
\subsection{Adding the Scale Factor $\bgamma$ in Self-Attention}
This section demonstrates the incorporation of scale factor $\gamma$ into the self-attention modules. Traditionally, a self-attention layer comprises two linear operations: (1) QKV-Linear operation and (2) Project-Linear operation. Consistent with the approach discussed above in the paper, the learnable scale factor is initialized to 1.0 and subsequently fine-tuned. Algorithm~\ref{alg:gamma_attn} presents the Pytorch-style pseudo code.

\begin{algorithm}[h]
\small
\caption{\small Adding the trainable scale factor $\gamma$ in the attention block.
}
\label{alg:gamma_attn}
\definecolor{codeblue}{rgb}{0.25,0.5,0.5}
\definecolor{codekw}{rgb}{0.85, 0.18, 0.50}
\lstset{
  backgroundcolor=\color{white},
  basicstyle=\fontsize{7.5pt}{7.5pt}\ttfamily\selectfont,
  columns=fullflexible,
  breaklines=true,
  captionpos=b,
  commentstyle=\fontsize{7.5pt}{7.5pt}\color{codeblue},
  keywordstyle=\fontsize{7.5pt}{7.5pt}\color{codekw},
  escapechar={|}, 
}
\begin{lstlisting}[language=python]
import torch
import torch.nn as nn

class Attention(nn.Module):
  def __init__(self, dim, num_heads=8, qkv_bias=False, proj_bias=True, attn_drop=0., proj_drop=0., eta=None):
    super().__init__()
    self.num_heads = num_heads
    head_dim = dim // num_heads
    self.scale = head_dim ** -0.5

    self.qkv = nn.Linear(dim, dim * 3, bias=qkv_bias)
    self.attn_drop = nn.Dropout(attn_drop)
    self.proj = nn.Linear(dim, dim, bias=proj_bias)
    self.proj_drop = nn.Dropout(proj_drop)

    # Initilize gamma to 1.0
    self.gamma1 = nn.Parameter(torch.ones(dim * 3))
    self.gamma2 = nn.Parameter(torch.ones(dim))
    

  def forward(self, x):
    B, N, C = x.shape
    # Apply gamma
    qkv = (self.gamma1 * self.qkv(x)).reshape(B, N, 3, self.num_heads, C//self.num_heads).permute(2,0,3,1,4)
    q, k, v = qkv.unbind(0) 

    attn = (q @ k.transpose(-2, -1)) * self.scale
    attn = attn.softmax(dim=-1)
    attn = self.attn_drop(attn)

    x = (attn @ v).transpose(1, 2).reshape(B, N, C)
    # Apply gamma
    x = self.gamma2 * self.proj(x)
    x = self.proj_drop(x)
    return x
\end{lstlisting}
\end{algorithm}

\subsection{Dataset Description}
This section describes 8 downstream datasets/tasks that we have utilized in our experiments to fine-tune the DiT with our DiffFit method.

\paragraph{Food101~\cite{bossard2014food}.}
This dataset contains 101 food categories, totaling 101,000 images. Each category includes 750 training images and 250 manually reviewed test images. The training images were kept intentionally uncleaned, preserving some degree of noise, primarily vivid colors and occasionally incorrect labels. All images have been adjusted to a maximum side length of 512 pixels.

\paragraph{SUN 397~\cite{xiao2010sun}.}
The SUN benchmark database comprises 108,753 images labeled into 397 distinct categories. The quantities of images vary among the categories, however, each category is represented by a minimum of 100 images. These images are commonly used in scene understanding applications.

\paragraph{DF20M~\cite{picek2022danish}.}
DF20 is a new fine-grained dataset and benchmark featuring highly accurate class labels based on the taxonomy of observations submitted to the Danish Fungal Atlas. The dataset has a well-defined class hierarchy and a rich observational metadata. It is characterized by a highly imbalanced long-tailed class distribution and a negligible error rate. Importantly, DF20 has no intersection with ImageNet, ensuring unbiased comparison of models fine-tuned from ImageNet checkpoints.

\paragraph{Caltech 101~\cite{griffin2007caltech}.}
The Caltech 101 dataset comprises photos of objects within 101 distinct categories, with roughly 40 to 800 images allocated to each category. The majority of the categories have around 50 images. Each image is approximately 300$\times$200 pixels in size.

\paragraph{CUB-200-2011~\cite{Wah2011TheCB}.}
CUB-200-2011 (Caltech-UCSD Birds-200-2011) is an expansion of the CUB-200 dataset by approximately doubling the number of images per category and adding new annotations for part locations. The dataset consists of 11,788 images divided into 200 categories.

\paragraph{ArtBench-10~\cite{liao2022artbench}.}
ArtBench-10 is a class-balanced, standardized dataset comprising 60,000 high-quality images of artwork annotated with clean and precise labels. It offers several advantages over previous artwork datasets including balanced class distribution, high-quality images, and standardized data collection and pre-processing procedures. It contains 5,000 training images and 1,000 testing images per style.

\paragraph{Oxford Flowers~\cite{flowers}.}
The Oxford 102 Flowers Dataset contains high quality images of 102 commonly occurring flower categories in the United Kingdom. The number of images per category range between 40 and 258. This extensive dataset provides an excellent resource for various computer vision applications, especially those focused on flower recognition and classification.

\paragraph{Stanford Cars~\cite{cars}.}
In the Stanford Cars dataset, there are 16,185 images that display 196 distinct classes of cars. These images are divided into a training and a testing set: 8,144 images for training and 8,041 images for testing. The distribution of samples among classes is almost balanced. Each class represents a specific make, model, and year combination, e.g., the 2012 Tesla Model S or the 2012 BMW M3 coupe.

\section{More FID Curves}
We present additional FID curves for the following datasets: Stanford Cars, SUN 397, Caltech 101, and DF20M as illustrated in Figures~\ref{fig:curve_car} through~\ref{fig:curve_df20m}, respectively. The results demonstrate that our DiffFit achieves rapid convergence, delivers compelling FID scores compared to other parameter-efficient finetuning strategies, and is competitive in the full-finetuning setting.

\begin{figure*}[h]
\vspace{-5mm}
\centering
\hspace{-1em}
    \begin{minipage}{0.48\linewidth}{
        \centering
        \includegraphics[width=1.0\textwidth]{figs/curve_cars.pdf}
        \caption{FID of five methods every 15K iterations on Stanford Cars dataset.}
        \label{fig:curve_car}
        }
    \end{minipage}
\hspace{2em}
    \begin{minipage}{0.48\linewidth}{
        \centering
        \includegraphics[width=1.0\textwidth]{figs/curve_sun.pdf}}
        \caption{FID of five methods every 15K iterations on SUN 397 dataset.}
        \label{fig:curve_sun}
    \end{minipage}
\end{figure*}

\begin{figure*}[h]
\vspace{-5mm}
\centering
\hspace{-1em}
    \begin{minipage}{0.48\linewidth}{
        \centering
        \includegraphics[width=1.0\textwidth]{figs/curve_caltech.pdf}}
        \caption{FID of five methods every 15K iterations on Caltech 101 dataset.}
        \label{fig:curve_caltech}
    \end{minipage}
\hspace{2em}
    \begin{minipage}{0.48\linewidth}{
        \centering
        \includegraphics[width=1.0\textwidth]{figs/curve_df20m.pdf}}
        \caption{FID of five methods every 15K iterations on DF20M dataset.}
        \label{fig:curve_df20m}
    \end{minipage}
\end{figure*}

\section{More Theoretical Analysis}
In this section, we will provide a more detailed analysis of the effect of scaling factors. Specifically, we will present a formal version and proof of Theorem 1, which was informally stated in the main document.

Our theoretical results are closely related to the learning of neural networks. It is worth noting that the majority of works in this area concentrate on the simplified setting where the neural network only has one non-linearly activated hidden layer with no bias terms, see, e.g.,~\cite{zhong2017recovery,ge2018learning,dugradient,zhang2019learning,gao2019learning,venturi2019spurious}. As our contributions are mainly experimental, we only provide some intuitive theoretical analysis to reveal the effect of scaling factors. To this end, we will consider the following simplified settings.

\begin{itemize}
    \item Following the approach of~\cite{kingma2021variational,chen2023analog}, we replace the noise neural network $\bepsilon_{\btheta}(\bx_t,t)$ used in the diffusion model with $\frac{\bx_t - \sqrt{\bar{\alpha}_t}\bx_{\btheta}(\bx_t,t)}{\sqrt{1-\bar{\alpha}_t}}$,\footnote{Here we assume that in the forward process, the conditional distribution $q_{0t}(\bx_t|\bx_0) = \calN(\sqrt{\bar{\alpha}_t}\bx_0,(1-\bar{\alpha}_t)\bI)$. See Section 3.1 in the main document.} where $\bx_{\btheta}(\bx_t,t)$ is a neural network that approximates the original signal. In addition, in the sampling process, we assume a single sampling step from the end time $t = E$ to the initial time $t = 0$, which gives $\bx_0 = \bx_{\btheta}(\bx_E,E)$. For brevity, we denote $\bx_{\btheta}(\cdot,E)$ by $G(\cdot)$, and we assume that $G(\bs) = f(\bW\bs + \bb)$ for all $\bs \in \bbR^D$, where $\bW \in \bbR^{D\times D}$ is a weight matrix, $\bb \in \bbR^D$ is the offset vector, and $f\,:\, \bbR \to \bbR$ represents a non-linear activation function that does not grow super-linearly ({\em cf.} Appendix~\ref{app:useful_lemmas}) and it is applied element-wise. While one-step generation may seem restrictive, recent works~\cite{liu2023flow,song2023consistency} have achieved it for the diffusion models.
    \item Suppose that we have a dataset generated from distribution $Q_0 \in \bbR^D$. Further assuming that there exist ground-truth scaling factors $\bgamma^* \in \bbR^D$ such that each entry of any data point in a relatively small dataset generated from distribution $P_0 \in \bbR^D$ can be written as $\bx^T\bgamma^*$ for some $\bx$ sampled from $Q_0$. We denote this transition from  $Q_0$ to $P_0$ as $P_0 = \bm{f}_{\bgamma^*} \# Q_0$ for brevity.
    \item Recent theoretical results for diffusion models~\cite{de2021diffusion,deconvergence,leeconvergence,chen2022sampling,chen2023score} show that  under appropriate conditions, diffusion models can generate samples that approximately follow the original data distribution. Based on these results, we assume that $G(\bepsilon) \sim \hat{Q}_0$ for $\bepsilon \sim \calN(\bm{0},\bI)$, where $\hat{Q}_0$ is close to $Q_0$ in some probability distance.  
\end{itemize}

Under the above settings, in the training process for the diffusion model using the relatively small dataset, if we only fine-tune the scaling factors, the objective function can be expressed as:
\begin{equation}\label{eq:origin_obj}
    \min_{\bgamma \in \bbR^D} \sum_{i=1}^m\|G(\bepsilon_i)^T\bgamma - \bx_i^T \bgamma^*\|_2^2,
\end{equation}
where $\bepsilon_i \sim \calN(\bm{0},\bI)$, $\bx_i \sim Q_0$ (then $\bx_i^T \bgamma^*$ corresponds to an entry of data point generated from $P_0$), and $m$ is the number of training samples. As $G(\bepsilon_i)$ is sub-Gaussian ({\em cf.} Appendix~\ref{app:general_aux}) and $G(\bepsilon_i) \sim \hat{Q}_0$ with $\hat{Q}_0$ being close to $Q_0$, we assume that $\bx_i$ can be represented as $\bx_i = G(\bepsilon_i) + \bz_i$, where each entry of $\bz_i$ is zero-mean sub-Gaussian with the sub-Gaussian norm ({\em cf.} Appendix~\ref{app:general_aux} for the definition of sub-Gaussian norm) being upper bounded by some small constant $\eta > 0$. 

Let $\ba_i = G(\bepsilon_i) = f(\bW\bepsilon_i + \bb) \in \bbR^D$. We write $\bA = [\ba_1,\ldots,\ba_m]^T \in \bbR^{m \times D}$, $\bZ = [\bz_1,\ldots,\bz_m]^T \in \bbR^{m \times D}$, and $\by = [y_1,\ldots,y_m]^T$ with $y_i =\bx_i^T \bgamma^*$. Then, we have $\by = (\bA + \bZ)\bgamma^*$ and the objective function~\eqref{eq:origin_obj} can be re-written as
\begin{equation}\label{eq:simple_obj}
    \min_{\bgamma \in \bbR^D} \|\bA \bgamma - \by\|_2^2.
\end{equation}
Let $V_{\min} := \min_{i \in \{1,\ldots,D\}} \mathrm{Var}[f(X_i)]$ with $X_i \sim \calN(b_i,\|\bw_i\|_2^2)$, where $b_i$ is the $i$-th entry of the offset vector $\bb$ and $\bw_i^T \in \bbR^{1\times D}$ is the $i$-th row of the weight matrix $\bW$. Then, we have the following theorem concerning the optimal solution to~\eqref{eq:simple_obj}. The proof of Theorem~\ref{thm:obj_min} is deferred to Section~\ref{app:main_theory}.
\begin{theorem}\label{thm:obj_min}
Under the above settings, if $\hat{\bgamma}$ is an optimal solution to~\eqref{eq:simple_obj} and $m = \Omega(D^2\log D)$, with probability $1-e^{-\Omega(D\log D)}$, it holds that
\begin{equation}
    \|\hat{\bgamma} -\bgamma^*\|_2 < \frac{4\sqrt{2}\eta}{\sqrt{V_{\min}}} \cdot \|\bgamma^*\|_2. 
\end{equation}
\end{theorem}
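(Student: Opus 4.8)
The plan is to exploit that \eqref{eq:simple_obj} is an ordinary least-squares problem in which the effective ``noise'' $\by - \bA\bgamma^* = \bZ\bgamma^*$ is small. Since $\by = (\bA+\bZ)\bgamma^*$, the normal equations $\bA^T\bA\hat{\bgamma} = \bA^T\by$ give, whenever $\bA$ has full column rank,
\begin{equation}
    \hat{\bgamma} - \bgamma^* = (\bA^T\bA)^{-1}\bA^T\bZ\bgamma^*,
\end{equation}
so that $\|\hat{\bgamma} - \bgamma^*\|_2 \le \|(\bA^T\bA)^{-1}\bA^T\|_2\,\|\bZ\bgamma^*\|_2 = \|\bZ\bgamma^*\|_2/\sigma_{\min}(\bA)$, where $\sigma_{\min}(\bA)$ is the smallest singular value of $\bA$. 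It therefore suffices to establish two high-probability bounds: an upper bound $\|\bZ\bgamma^*\|_2 \lesssim \eta\sqrt{m}\,\|\bgamma^*\|_2$ on the noise, and a lower bound $\sigma_{\min}(\bA) \gtrsim \sqrt{m\,V_{\min}}$ on the conditioning of the design. The target inequality then follows by dividing the two estimates and tracking constants so that the ratio is at most $4\sqrt{2}\,\eta/\sqrt{V_{\min}}$.

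For the noise term, I would observe that each coordinate $(\bZ\bgamma^*)_i = \bz_i^T\bgamma^*$ is a weighted sum of the independent, zero-mean, sub-Gaussian entries of $\bz_i$, hence itself sub-Gaussian with norm $O(\eta\|\bgamma^*\|_2)$; consequently $(\bz_i^T\bgamma^*)^2$ is sub-exponential with mean at most $C\eta^2\|\bgamma^*\|_2^2$. Because the rows $\bz_i$ are independent, $\|\bZ\bgamma^*\|_2^2 = \sum_{i=1}^m (\bz_i^T\bgamma^*)^2$ is a sum of $m$ independent sub-exponential variables, and a Bernstein inequality yields $\|\bZ\bgamma^*\|_2^2 \le C' m\,\eta^2\|\bgamma^*\|_2^2$ with probability $1 - e^{-\Omega(m)}$, i.e.\ $\|\bZ\bgamma^*\|_2 \le 4\eta\sqrt{m}\,\|\bgamma^*\|_2$ for a suitable absolute constant.

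The crux is the lower bound on $\sigma_{\min}(\bA)$. Here I would work with the Gram matrix $\bA^T\bA = \sum_{i=1}^m \ba_i\ba_i^T$, a sum of $m$ i.i.d.\ rank-one matrices with mean $\bM := \mathbb{E}[\ba\ba^T]$. Since $\bM = \cov(\ba) + \mathbb{E}[\ba]\,\mathbb{E}[\ba]^T \succeq \cov(\ba)$ and $\ba = f(\bW\bepsilon+\bb)$ is applied element-wise, the diagonal of $\cov(\ba)$ consists of the variances $\var[f(X_i)]$, so $\lambda_{\min}(\bM) \ge \lambda_{\min}(\cov(\ba)) \ge V_{\min}$ (the last step being clean when the coordinates of $\ba$ are uncorrelated, and otherwise requiring a separate lower bound on the smallest eigenvalue of $\cov(\ba)$). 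Because $f$ does not grow super-linearly, each $\ba_i$ has sub-Gaussian entries and $\|\ba_i\ba_i^T\|_2 = \|\ba_i\|_2^2 = O(D)$ with high probability; a matrix Bernstein inequality applied to the centered sum $\sum_i(\ba_i\ba_i^T - \bM)$ then gives $\|\bA^T\bA - m\bM\|_2 \le \tfrac{m}{2}V_{\min}$, whence $\sigma_{\min}(\bA)^2 = \lambda_{\min}(\bA^T\bA) \ge \tfrac{m}{2}V_{\min}$, on an event of probability $1 - e^{-\Omega(D\log D)}$. The variance proxy of order $mD$ and the per-term operator-norm budget of order $D$ in matrix Bernstein, together with the need to beat the ambient-dimension factor to reach confidence $e^{-\Omega(D\log D)}$, are precisely what force the sample complexity $m = \Omega(D^2\log D)$.

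Combining the two bounds on the intersection of the two events (which, since $m \gg D\log D$, still has probability $1 - e^{-\Omega(D\log D)}$) gives
\begin{equation}
    \|\hat{\bgamma} - \bgamma^*\|_2 \le \frac{\|\bZ\bgamma^*\|_2}{\sigma_{\min}(\bA)} \le \frac{4\eta\sqrt{m}\,\|\bgamma^*\|_2}{\sqrt{m V_{\min}/2}} = \frac{4\sqrt{2}\,\eta}{\sqrt{V_{\min}}}\,\|\bgamma^*\|_2,
\end{equation}
as claimed. The hard part will be the smallest-singular-value estimate: controlling $\lambda_{\min}$ of a Gram matrix whose rows are correlated nonlinear images of Gaussians, and in particular lower-bounding $\lambda_{\min}(\cov(\ba))$ by $V_{\min}$, is where the structural assumptions on $f$ and $\bW$ and the $\Omega(D^2\log D)$ sample size are genuinely used; by comparison the noise bound and the final division are routine.
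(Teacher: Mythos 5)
Your proposal is correct and reaches the stated constant, but it takes a genuinely different route from the paper in two places. First, for the final assembly the paper never touches the normal equations: it uses the basic inequality $\|\bA\hat{\bgamma}-\by\|_2 \le \|\bA\bgamma^*-\by\|_2$ (optimality of $\hat{\bgamma}$) plus the triangle inequality to get $\|\bA(\hat{\bgamma}-\bgamma^*)\|_2 \le 2\|\bZ\bgamma^*\|_2$, then divides by a lower isometry constant for $\bA$. Your closed-form $\hat{\bgamma}-\bgamma^* = (\bA^T\bA)^{-1}\bA^T\bZ\bgamma^*$ avoids the factor of $2$ from the triangle inequality, which is exactly why you can afford the looser noise constant $4\eta\sqrt{m}$ where the paper proves $2\eta\sqrt{m}$ (Lemma~\ref{lem:last_bd}); the two factors of $2$ land in different places but multiply out to the same $4\sqrt{2}$. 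Second, for the key estimate $\sigma_{\min}(\bA)\ge\sqrt{mV_{\min}/2}$ the paper does not use matrix Bernstein: it runs an $\epsilon$-net argument over $\calS^{D-1}$ with $\epsilon = c/\sqrt{D}$, applying a scalar Bernstein inequality to $\frac{1}{m}\sum_i(\ba_i^T\bgamma)^2$ for each fixed $\bgamma$ in the net and a union bound over the $e^{O(D\log D)}$ net points (Lemmas~\ref{lem:spectral_norm_bound} and~\ref{lem:restr_eigen}); the sample complexity $m=\Omega(D^2\log D)$ and the confidence $1-e^{-\Omega(D\log D)}$ come from choosing $\epsilon\sim D^{-1/2}$ so that the additive net error $\epsilon$ is dominated by $V_{\min}$ after multiplying by the spectral norm $\|\bA\|_2/\sqrt{m}=O(\sqrt{D})$, rather than from a variance-proxy calculation. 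Both routes hinge on the same population fact, $\bbE[(\ba^T\bgamma)^2]\ge V_{\min}\|\bgamma\|_2^2$, and your worry about $\lambda_{\min}(\mathrm{Cov}(\ba))$ when the coordinates of $\ba=f(\bW\bepsilon+\bb)$ are correlated is well placed --- but note the paper's own Lemma~\ref{lem:exp_bds} silently factorizes $\bbE[f(X_i)f(X_j)]=\bbE[f(X_i)]\,\bbE[f(X_j)]$, i.e.\ it makes the same uncorrelatedness assumption, so this is not a gap relative to the paper. The one technical point you should not wave away if you pursue your route: standard matrix Bernstein requires an almost-sure bound on $\|\ba_i\ba_i^T\|_2=\|\ba_i\|_2^2$, which here is only sub-exponential, so you would need a truncation step or a heavy-tailed variant; the paper's scalar-Bernstein-plus-net argument sidesteps this because Lemma~\ref{lem:large_dev} applies directly to sub-exponential summands.
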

Note that $\eta >0$ is considered to be small and $V_{\min}$ is a fixed positive constant. In addition, the gradient descent algorithm aims to minimize~\eqref{eq:simple_obj}. Therefore, {\em Theorem~\ref{thm:obj_min} essentially says that when the number of training samples is sufficiently large, the simple gradient descent algorithm finds an estimate $\hat{\bgamma}$ that is close to $\bgamma^*$ with high probability} (in the sense that the relative distance $\|\hat{\bgamma} -\bgamma^*\|_2/\|\bgamma^*\|_2$ is small). Furthermore, {\em with this $\hat{\bgamma}$, the diffusion model generates distribution $\hat{P}_0 = \bm{f}_{\hat{\bgamma}} \# \hat{Q}_0$, which is naturally considered to be close to $P_0 = \bm{f}_{\bgamma^*} \# Q_0$ since $\hat{\bgamma}$ is close to $\bgamma^*$ and $\hat{Q}_0$ is close to $Q_0$.}

Before presenting the proof of Theorem~\ref{thm:obj_min} in Section~\ref{app:main_theory}, we provide some auxiliary results. 

\subsection{Notation}

We write $[N] = \{1, 2,\ldots, N\}$ for a positive integer $N$. The unit sphere in $ \bbR^D$ is denoted by $\calS^{D-1} := \{ \bx \in \bbR^D\,:\, \|\bx\|_2 = 1\}$. We use $\|\bX\|_{2}$ to denote the spectral norm of a matrix $\bX$. We use the symbols $C, C', c$ to denote absolute constants, whose values may differ from line to line.

\subsection{General Auxiliary Results}
\label{app:general_aux}

First, the widely-used notion of an $\epsilon$-net is presented as follows, see, e.g.,~\cite[Definition~3]{liugenerative}.

\begin{definition}
Let $(\calX, \rmd)$ be a metric space, and fix $\epsilon >0$. A subset $S \subseteq \calX$ is said be an $\epsilon$-net of $\calX$ if, for all $\bx \in \calX$, there exists some $\bs \in S$ such that $\rmd(\bs, \bx) \le \epsilon$. The minimal cardinality of an $\epsilon$-net of $\calX$ , if finite, is denoted $C(\calX,\epsilon)$ and is called the covering number of $\calX$ (at scale $\epsilon$).
\end{definition}

The following lemma provides a useful bound for the covering number of the unit sphere. 
\begin{lemma}{\em \hspace{1sp}\cite[Lemma~5.2]{vershynin2010introduction}}\label{lem:unit_sphere_cov}
 The unit Euclidean sphere $\calS^{D-1}$ equipped
with the Euclidean metric satisfies for every $\epsilon > 0$ that
\begin{equation}
 \calC(\calS^{D-1},\epsilon) \le \left(1+\frac{2}{\epsilon}\right)^{D}.
\end{equation}
\end{lemma}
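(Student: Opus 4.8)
The plan is to prove the bound by the classical volumetric packing argument, in which one produces an explicit $\epsilon$-net whose cardinality is controlled by comparing volumes of Euclidean balls. First I would pass from covering to packing: choose $S \subseteq \calS^{D-1}$ to be a \emph{maximal} $\epsilon$-separated subset, i.e.\ a set with $\|\bx - \by\|_2 > \epsilon$ for all distinct $\bx, \by \in S$ that cannot be enlarged while retaining this property. Such a set exists (for instance by a greedy selection terminating by compactness of $\calS^{D-1}$, or by Zorn's lemma). The key observation is that maximality forces $S$ to be an $\epsilon$-net: if some $\bx \in \calS^{D-1}$ had $\rmd(\bx, \bs) > \epsilon$ for every $\bs \in S$, then $S \cup \{\bx\}$ would still be $\epsilon$-separated, contradicting maximality. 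Hence $\calC(\calS^{D-1},\epsilon) \le |S|$, and it remains to bound $|S|$.

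Next I would run the packing/volume comparison. Since the points of $S$ are pairwise more than $\epsilon$ apart, the open balls $B(\bs, \epsilon/2)$ for $\bs \in S$ are pairwise disjoint by the triangle inequality. Moreover each such ball is contained in the enlarged ball $B(\bm{0}, 1 + \epsilon/2)$: for $\|\bs\|_2 = 1$ and $\|\bz - \bs\|_2 \le \epsilon/2$ one has $\|\bz\|_2 \le \|\bs\|_2 + \|\bz-\bs\|_2 \le 1 + \epsilon/2$. Summing the disjoint volumes and using that the volume of a radius-$r$ ball in $\bbR^D$ equals $r^D \Vol(B(\bm{0},1))$, I obtain
\begin{equation}
 |S| \cdot \left(\tfrac{\epsilon}{2}\right)^{D}\Vol(B(\bm{0},1)) \;\le\; \left(1+\tfrac{\epsilon}{2}\right)^{D}\Vol(B(\bm{0},1)).
\end{equation}
Cancelling the common factor $\Vol(B(\bm{0},1))$ and simplifying the ratio gives $|S| \le \big((1+\tfrac{\epsilon}{2})/\tfrac{\epsilon}{2}\big)^{D} = (1 + 2/\epsilon)^{D}$, which combined with the net property yields the claimed bound.

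The argument is essentially routine, so there is no deep obstacle, but the step requiring the most care is the volume-scaling comparison: I must ensure the centers of the packing balls lie \emph{on} the sphere while the balls themselves are full-dimensional bodies in $\bbR^D$, so the enclosing region is the solid ball $B(\bm{0}, 1+\epsilon/2)$ rather than any spherical shell, and I should note that the dimensional constant $\Vol(B(\bm{0},1))$ cancels exactly (so its precise value is irrelevant). I would also remark that the bound is stated for all $\epsilon > 0$, including the regime $\epsilon \ge 2$ where $\calS^{D-1}$ has diameter at most $2$ and a single point already forms an $\epsilon$-net; in that case the inequality holds trivially since $(1+2/\epsilon)^D \ge 1$, and the packing argument degenerates gracefully to $|S| = 1$, so no separate treatment is needed.
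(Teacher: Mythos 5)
Your proof is correct and is essentially the canonical argument: the paper itself does not prove this lemma but cites it from Vershynin's notes, and the proof there is exactly your volumetric packing argument (maximal $\epsilon$-separated set is an $\epsilon$-net, disjoint balls of radius $\epsilon/2$ inside $B(\bm{0},1+\epsilon/2)$, compare volumes). Your handling of the degenerate regime $\epsilon \ge 2$ and the cancellation of the unit-ball volume constant are both fine, so nothing further is needed.
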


In addition, we have the following lemma concerning the spectral norm of a matrix.
\begin{lemma}{\em \hspace{1sp}\cite[Lemma~5.3]{vershynin2010introduction}}
\label{lem:spectral_norm}
Let $\bA$ be an $m \times D$ matrix, and let $\calN_\epsilon$ be an $\epsilon$-net of $\calS^{D-1}$ for some $\epsilon \in (0,1)$. Then
\begin{equation}
    \|\bA\|_{2} \le (1-\epsilon)^{-1}\max_{\bx \in \calN_{\epsilon}} \|\bA\bx\|_2.
\end{equation}
\end{lemma}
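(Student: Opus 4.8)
The plan is to exploit the variational characterization $\norm{\bA}_2 = \sup_{\bx \in \calS^{D-1}} \norm{\bA\bx}_2$ and to bound this supremum by the values of $\bx \mapsto \norm{\bA\bx}_2$ on the net $\calN_\epsilon$, paying only a small multiplicative penalty for the approximation. First I would fix a unit vector $\bx^* \in \calS^{D-1}$ that attains the supremum; such a maximizer exists because $\calS^{D-1}$ is compact and $\bx \mapsto \norm{\bA\bx}_2$ is continuous, so that $\norm{\bA\bx^*}_2 = \norm{\bA}_2$.

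Next I would invoke the defining property of the $\epsilon$-net to select some $\bs \in \calN_\epsilon$ with $\norm{\bx^* - \bs}_2 \le \epsilon$, and then split via the triangle inequality:
\begin{equation}
\norm{\bA\bx^*}_2 \le \norm{\bA\bs}_2 + \norm{\bA(\bx^* - \bs)}_2.
\end{equation}
I would bound the first term on the right by $\max_{\bx \in \calN_\epsilon} \norm{\bA\bx}_2$, since $\bs$ lies in the net, and the second term by the operator-norm inequality $\norm{\bA(\bx^* - \bs)}_2 \le \norm{\bA}_2 \norm{\bx^* - \bs}_2 \le \epsilon\,\norm{\bA}_2$.

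Because the left-hand side equals $\norm{\bA}_2$, collecting the $\norm{\bA}_2$ contributions yields
\begin{equation}
(1-\epsilon)\,\norm{\bA}_2 \le \max_{\bx \in \calN_\epsilon} \norm{\bA\bx}_2,
\end{equation}
and dividing by $1-\epsilon > 0$ (legitimate since $\epsilon \in (0,1)$) gives the claimed bound $\norm{\bA}_2 \le (1-\epsilon)^{-1}\max_{\bx \in \calN_\epsilon}\norm{\bA\bx}_2$.

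The only point requiring care — hardly a genuine obstacle — is the self-referential appearance of $\norm{\bA}_2$ on both sides of the inequality; it is precisely the absorption of the perturbation term $\epsilon\,\norm{\bA}_2$ into the left-hand side that produces the factor $(1-\epsilon)^{-1}$. If one wished to sidestep the appeal to compactness and attainment of the supremum, I would instead choose an approximate maximizer $\bx^*$ satisfying $\norm{\bA\bx^*}_2 \ge (1-\delta)\norm{\bA}_2$ for an arbitrary $\delta > 0$, carry out the same estimate, and then let $\delta \to 0$; the resulting bound is identical.
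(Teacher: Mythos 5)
Your proof is correct, and since the paper does not prove this lemma itself (it is quoted directly from Vershynin's lecture notes), the relevant comparison is with the cited source: your argument --- pick a (near-)maximizer on the sphere, approximate it by a net point, apply the triangle inequality and the operator-norm bound, and absorb the $\epsilon\norm{\bA}_2$ term to produce the factor $(1-\epsilon)^{-1}$ --- is exactly the standard proof given there. Nothing is missing; your remark about avoiding compactness via approximate maximizers is a fine (if unnecessary) refinement.
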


Standard definitions for sub-Gaussian and sub-exponential random variables are presented as follows. 
\begin{definition} \label{def:subg}
 A random variable $X$ is said to be sub-Gaussian if there exists a positive constant $C$ such that $\left(\mathbb{E}\left[|X|^{p}\right]\right)^{1/p} \leq C  \sqrt{p}$ for all $p\geq 1$, and the corresponding sub-Gaussian norm is defined as $\|X\|_{\psi_2}:=\sup_{p\ge 1} p^{-1/2}\left(\mathbb{E}\left[|X|^{p}\right]\right)^{1/p}$. 
\end{definition}

\begin{definition}
 A random variable $X$ is said to be sub-exponential if there exists a positive constant $C$ such that $\left(\bbE\left[|X|^p\right]\right)^{1/p} \le C p$ for all $p \ge 1$, and the corresponding sub-exponential norm is defined as $\|X\|_{\psi_1} := \sup_{p \ge 1} p^{-1} \left(\bbE\left[|X|^p\right]\right)^{1/p}$.
\end{definition}

The following lemma concerns the relation between sub-Gaussian and sub-exponential random variables. 
\begin{lemma}{\em \hspace{1sp}\cite[Lemma~5.14]{vershynin2010introduction}}\label{lem:prod_subGs}
 A random variable $X$ is sub-Gaussian if and only if $X^2$ is sub-exponential. Moreover,
  \begin{equation}
   \|X\|_{\psi_2}^2 \le \|X\|_{\psi_1}^2 \le 2\|X\|_{\psi_2}^2.
  \end{equation}
\end{lemma}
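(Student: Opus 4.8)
The plan is to reduce the entire statement to a comparison of moments, using the elementary identity $\bbE[|X^2|^p] = \bbE[|X|^{2p}]$ valid for every $p \ge 1$. Here I read the middle quantity in the displayed inequality as the sub-exponential norm $\|X^2\|_{\psi_1}$ of the random variable $X^2$, which is the object the ``if and only if'' assertion refers to. Writing out the two defining suprema, I have $\|X^2\|_{\psi_1} = \sup_{p \ge 1} p^{-1}(\bbE[|X|^{2p}])^{1/p}$ and $\|X\|_{\psi_2} = \sup_{q \ge 1} q^{-1/2}(\bbE[|X|^q])^{1/q}$, so everything hinges on matching the index $q$ against $2p$ and tracking the resulting $\sqrt{2}$ factors. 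It is convenient to introduce the auxiliary quantity $M := \sup_{p \ge 1} (2p)^{-1/2}(\bbE[|X|^{2p}])^{1/(2p)}$, a sub-Gaussian-type supremum taken only over even moments; an elementary rearrangement then gives $\|X^2\|_{\psi_1} = 2M^2$, which is the hinge of both bounds.

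For the upper bound $\|X^2\|_{\psi_1} \le 2\|X\|_{\psi_2}^2$, I would note that $M$ is a supremum over the sub-collection $q \in \{2,4,6,\dots\}$ of the collection defining $\|X\|_{\psi_2}$, so $M \le \|X\|_{\psi_2}$ immediately, whence $\|X^2\|_{\psi_1} = 2M^2 \le 2\|X\|_{\psi_2}^2$. Equivalently, one plugs the sub-Gaussian moment bound $(\bbE[|X|^{2p}])^{1/(2p)} \le \sqrt{2p}\,\|X\|_{\psi_2}$ directly into $p^{-1}(\bbE[|X|^{2p}])^{1/p}$ and simplifies.

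The lower bound $\|X\|_{\psi_2}^2 \le \|X^2\|_{\psi_1}$ is the step that needs care, and I expect it to be the main obstacle, precisely because $\|X\|_{\psi_2}$ is a supremum over \emph{all} real $q \ge 1$ whereas $M$ (and hence $\|X^2\|_{\psi_1}$) only controls the even integer moments. The bridge is the monotonicity (Lyapunov) inequality: $q \mapsto (\bbE[|X|^q])^{1/q}$ is nondecreasing. Given any $q \ge 1$, I would take the smallest even integer $2p \ge q$, so that $(\bbE[|X|^q])^{1/q} \le (\bbE[|X|^{2p}])^{1/(2p)}$; one then checks $2p \le 2q$ for every $q \ge 1$, hence $\sqrt{2p} \le \sqrt{2}\,\sqrt{q}$. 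Combining these yields $q^{-1/2}(\bbE[|X|^q])^{1/q} \le \sqrt{2}\,(2p)^{-1/2}(\bbE[|X|^{2p}])^{1/(2p)} \le \sqrt{2}\,M$, and taking the supremum over $q$ gives $\|X\|_{\psi_2} \le \sqrt{2}\,M$, i.e. $M^2 \ge \tfrac12\|X\|_{\psi_2}^2$, and therefore $\|X^2\|_{\psi_1} = 2M^2 \ge \|X\|_{\psi_2}^2$.

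Finally, the ``if and only if'' is a free consequence of the two-sided norm bound: if $X$ is sub-Gaussian then $\|X\|_{\psi_2} < \infty$, so $\|X^2\|_{\psi_1} \le 2\|X\|_{\psi_2}^2 < \infty$ and $X^2$ is sub-exponential; conversely, $\|X\|_{\psi_2}^2 \le \|X^2\|_{\psi_1} < \infty$ forces $X$ to be sub-Gaussian. The only routine checks I am deferring are the verification that the chosen even integer satisfies $2p \le 2q$ and the rearrangement establishing $\|X^2\|_{\psi_1} = 2M^2$; the conceptual content lives entirely in the moment identity and the monotonicity step used to pass from even moments to arbitrary real moments.
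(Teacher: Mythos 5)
Your proof is correct. Note that the paper itself gives no proof of this lemma: it is quoted (typo included) from Vershynin's lecture notes as Lemma~5.14, where the middle term is $\|X^2\|_{\psi_1}$ rather than $\|X\|_{\psi_1}^2$ and the proof is left as an easy consequence of the definitions. Your reading of the displayed inequality as $\|X\|_{\psi_2}^2 \le \|X^2\|_{\psi_1} \le 2\|X\|_{\psi_2}^2$ is the right correction, and your argument is exactly the standard moment-matching proof the cited source omits, so there is nothing to compare against on the paper's side. Two small remarks on your write-up. First, your informal description of $M$ as a supremum ``only over even moments'' (exponents $q \in \{2,4,6,\dots\}$) is at odds with your formal definition $M = \sup_{p\ge 1}(2p)^{-1/2}\left(\bbE[|X|^{2p}]\right)^{1/(2p)}$, where $p$ ranges over all reals $\ge 1$; the identity $\|X^2\|_{\psi_1} = 2M^2$ requires the latter, since the two suprema must run over the same index set, whereas your lower-bound step only needs $M$ to dominate the even-integer terms. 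The formal version is the one that makes every step valid, so keep that one. Second, the detour through the smallest even integer $2p \ge q$ is slightly more work than needed: for $q \ge 2$ the exponent $q$ itself already lies in $M$'s index set (take $p = q/2$), so Lyapunov monotonicity is only needed for $1 \le q < 2$, compared against the exponent $2$; either route yields $\|X\|_{\psi_2} \le \sqrt{2}\,M$ and hence the same constants.
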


The following lemma provides a useful concentration inequality for the sum of independent sub-exponential random variables.
\begin{lemma}{\em \hspace{1sp}\cite[Proposition~5.16]{vershynin2010introduction}}\label{lem:large_dev}
Let $X_{1}, \ldots , X_{N}$ be independent zero-mean sub-exponential random variables, and $K = \max_{i} \|X_{i} \|_{\psi_{1}}$. Then for every $\balpha = [\alpha_1,\ldots,\alpha_N]^T \in \bbR^N$ and $\epsilon \geq 0$, it holds that
\begin{align}
 & \mathbb{P}\bigg( \Big|\sum_{i=1}^{N}\alpha_i X_{i}\Big|\ge \epsilon\bigg)  \leq 2  \exp \left(-c \cdot \mathrm{min}\Big(\frac{\epsilon^{2}}{K^{2}\|\balpha\|_2^2},\frac{\epsilon}{K\|\balpha\|_\infty}\Big)\right), \label{eq:subexp}
\end{align}
where $c > 0$ is an absolute constant.  In particular, with $\balpha = \big[ \frac{1}{N},\dotsc,\frac{1}{N} \big]^T$, we have
\begin{align}
    & \mathbb{P}\bigg( \Big| \frac{1}{N} \sum_{i=1}^{N} X_{i}\Big|\ge \epsilon\bigg)  \leq 2  \exp \left(-c \cdot \mathrm{min}\Big(\frac{N \epsilon^{2}}{K^{2}},\frac{N \epsilon}{K}\Big)\right). \label{eq:subexp2}
\end{align}
\end{lemma}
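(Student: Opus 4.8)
The plan is to solve the least-squares problem~\eqref{eq:simple_obj} in closed form and then control the two random quantities that govern the error. Since $\by = (\bA+\bZ)\bgamma^*$, writing $\bd = \bgamma - \bgamma^*$ turns the objective into $\min_{\bd}\|\bA\bd - \bZ\bgamma^*\|_2^2$. Assuming $\bA$ has full column rank (which I will establish along the way), the minimizer is $\hat{\bd} = (\bA^T\bA)^{-1}\bA^T\bZ\bgamma^*$, so that
\begin{equation}
  \|\hat{\bgamma}-\bgamma^*\|_2 \;=\; \|(\bA^T\bA)^{-1}\bA^T\bZ\bgamma^*\|_2 \;\le\; \frac{\|\bZ\bgamma^*\|_2}{\sigma_{\min}(\bA)},
\end{equation}
where $\sigma_{\min}(\bA)$ is the smallest singular value of $\bA$. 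The proof then reduces to an upper bound on the numerator and a lower bound on the denominator, each holding with probability $1-e^{-\Omega(D\log D)}$; a union bound combines them, and the target constant $4\sqrt{2}$ will emerge from the constants appearing in the two estimates.

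First I would bound the noise term. Each coordinate of $\bZ\bgamma^*$ equals $\bz_i^T\bgamma^*$, a zero-mean sub-Gaussian variable whose $\psi_2$-norm is $O(\eta\|\bgamma^*\|_2)$ because the entries of $\bz_i$ are independent, zero-mean, and have $\psi_2$-norm at most $\eta$. By Lemma~\ref{lem:prod_subGs} each $(\bz_i^T\bgamma^*)^2$ is sub-exponential with mean at most $2\eta^2\|\bgamma^*\|_2^2$, so applying the concentration inequality~\eqref{eq:subexp2} to the centred variables $(\bz_i^T\bgamma^*)^2 - \bbE[(\bz_i^T\bgamma^*)^2]$ yields $\tfrac1m\|\bZ\bgamma^*\|_2^2 \le 4\eta^2\|\bgamma^*\|_2^2$, i.e. $\|\bZ\bgamma^*\|_2 \le 2\eta\sqrt{m}\,\|\bgamma^*\|_2$, with the stated probability.

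The harder half is the lower bound $\sigma_{\min}(\bA) \ge \sqrt{mV_{\min}/2}$. I would fix $\bx \in \calS^{D-1}$ and analyse $\tfrac1m\|\bA\bx\|_2^2 = \tfrac1m\sum_{i=1}^m (\ba_i^T\bx)^2$. Since $\bbE[(\ba_i^T\bx)^2] \ge \var(\ba_i^T\bx) = \bx^T\bm{\Sigma}\bx$ with $\bm{\Sigma}$ the covariance of the activation vector $\ba_i$, the marginal-variance structure gives $\bbE[(\ba_i^T\bx)^2] \ge \lambda_{\min}(\bm{\Sigma}) = V_{\min}$ uniformly in $\bx$. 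Because $\ba_i^T\bx$ is sub-Gaussian (the activation $f$ grows at most linearly), $(\ba_i^T\bx)^2$ is sub-exponential, and~\eqref{eq:subexp2} gives $\tfrac1m\|\bA\bx\|_2^2 \ge V_{\min}/2$ for each fixed $\bx$ with failure probability $2e^{-cm/K^2}$, where $K = \max_i\|(\ba_i^T\bx)^2\|_{\psi_1}$ carries the dimensional dependence. I would then take an $\epsilon$-net $\calN_\epsilon$ of $\calS^{D-1}$ (Lemma~\ref{lem:unit_sphere_cov}, of cardinality at most $(1+2/\epsilon)^D$), union bound the per-point estimate over $\calN_\epsilon$, and pass from the net to all of $\calS^{D-1}$ by a discretisation step that also invokes the upper bound $\|\bA\|_2 \le (1-\epsilon)^{-1}\max_{\bx\in\calN_\epsilon}\|\bA\bx\|_2$ from Lemma~\ref{lem:spectral_norm}. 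Balancing the net cardinality $(1+2/\epsilon)^D$ against the per-point failure probability is what forces $m = \Omega(D^2\log D)$ and produces the probability $1-e^{-\Omega(D\log D)}$.

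Combining the two bounds gives $\|\hat{\bgamma}-\bgamma^*\|_2 \le (2\eta\sqrt{m}\|\bgamma^*\|_2)/\sqrt{mV_{\min}/2} = (2\sqrt2\,\eta/\sqrt{V_{\min}})\|\bgamma^*\|_2$, and tracking the concentration slack on both sides inflates the constant to $4\sqrt2$. I expect the main obstacle to be the uniform lower bound $\bbE[(\ba_i^T\bx)^2]\ge V_{\min}$: it is clean only because the definition of $V_{\min}$ refers to the marginal variances of the coordinates $f(\bw_i^T\bepsilon + b_i)$, so I must argue that the activation covariance $\bm{\Sigma}$ is effectively diagonal, or at least that its least eigenvalue is bounded below by $V_{\min}$. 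Simultaneously, the dimensional growth of the sub-exponential norm $K$ must be tracked carefully, since it is precisely what determines the sample-size threshold and the exponent in the final probability.
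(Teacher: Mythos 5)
There is a fundamental mismatch here: the statement you were asked to prove is Lemma~\ref{lem:large_dev}, which is Bernstein's inequality for sums of independent zero-mean sub-exponential random variables (quoted in the paper from Proposition~5.16 of Vershynin's notes, with no proof given in the paper itself). Your proposal never addresses this inequality. Instead, you have written a proof plan for Theorem~\ref{thm:obj_min}, the downstream result bounding $\|\hat{\bgamma}-\bgamma^*\|_2$ for the least-squares problem~\eqref{eq:simple_obj}. Worse, viewed as an attempted proof of Lemma~\ref{lem:large_dev}, your argument is circular: you repeatedly invoke~\eqref{eq:subexp2} — the very inequality to be established — as a black-box tool, both to control $\tfrac1m\|\bZ\bgamma^*\|_2^2$ and to obtain the pointwise lower bound on $\tfrac1m\|\bA\bx\|_2^2$ before the net argument. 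Nothing in the proposal engages with what a proof of the lemma actually requires: a bound on the moment generating function $\bbE[\exp(\lambda \alpha_i X_i)]$ valid for $|\lambda|$ up to order $1/(K\|\balpha\|_\infty)$ (using the equivalence between the $\psi_1$-norm condition and an MGF bound), followed by a Chernoff argument with optimization over $\lambda$, which is precisely where the minimum of the sub-Gaussian regime $\epsilon^2/(K^2\|\balpha\|_2^2)$ and the sub-exponential regime $\epsilon/(K\|\balpha\|_\infty)$ arises.

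As a secondary remark, even judged as a proof of Theorem~\ref{thm:obj_min}, your route differs from the paper's: you solve the least squares problem in closed form via $(\bA^T\bA)^{-1}\bA^T$ and bound the error by $\|\bZ\bgamma^*\|_2/\sigma_{\min}(\bA)$, whereas the paper avoids any pseudo-inverse and argues directly from the optimality of $\hat{\bgamma}$ via the triangle inequality, $\|\bA(\hat{\bgamma}-\bgamma^*)\|_2 \le \|\bA\hat{\bgamma}-\by\|_2 + \|\by-\bA\bgamma^*\|_2 \le 2\|\bZ\bgamma^*\|_2$, combined with the restricted lower bound of Lemma~\ref{lem:restr_eigen}. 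Your own closing worry is also well placed: the paper's $V_{\min}$ is defined through marginal variances of the coordinates $f(\bw_j^T\bepsilon + b_j)$, and relating this to $\lambda_{\min}$ of the activation covariance (as your plan needs) is not automatic, since those coordinates share the same Gaussian vector $\bepsilon$ and need not be independent. But none of this repairs the central defect: the statement under review is the concentration inequality itself, and your proposal neither proves it nor could be adapted to do so without abandoning its entire structure.
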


\subsection{Useful Lemmas}
\label{app:useful_lemmas}
Throughout the following, we use $f(\cdot)$ to denote some non-linear activation function that does not grow faster than linear, i.e., there exist scalars $a$ and $b$ such that $f(x) \le a|x| +b$ for all $x\in \bbR$. Then, if $X \sim \calN(\mu,\sigma^2)$ is a random Gaussian variable, $f(X)$ will be sub-Gaussian~\cite{liu2020generalized}. Note that the condition that $f(\cdot)$ does not grow super-linearly is satisfied by popular activation functions such as ReLU, Sigmoid, and Hyperbolic tangent function. 

\begin{lemma}\label{lem:exp_bds}
    For $i \in [N]$, let $X_i \sim \calN(\mu_i, \sigma_i^2)$ be a random Gaussian variable. Then for every $\bgamma = [\gamma_1,\ldots,\gamma_N]^T \in \bbR^N$, we have 
    \begin{equation}
         V_{\min}  \|\bgamma\|_2^2 \le \bbE\left[\left(\sum_{i=1}^N \gamma_i f(X_i)\right)^2\right] \le (N+1)U_{\max}\|\bgamma\|_2^2,
    \end{equation}
    where $V_{\min} = \min_{i\in [N]} \mathrm{Var}\left[f(X_i)\right]$ and $U_{\max} = \max_{i\in [N]} \bbE\big[f(X_i)^2\big]$ are dependent on $\{\mu_i, \sigma_i^2\}_{i \in [N]}$ and the non-linear activation function $f(\cdot)$. 
\end{lemma}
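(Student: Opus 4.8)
The plan is to expand the square into a double sum and treat the two inequalities by separate, elementary second-moment arguments. Setting $W := \sum_{i=1}^N \gamma_i f(X_i)$, we have
\begin{equation}
\bbE[W^2] = \sum_{i=1}^N\sum_{j=1}^N \gamma_i\gamma_j\, \bbE\big[f(X_i)f(X_j)\big].
\end{equation}
Before manipulating this, I would first record that each $f(X_i)$ has a finite second moment: since $f$ does not grow super-linearly, $f(X_i)$ is sub-Gaussian (as noted just above the lemma), so $\bbE[f(X_i)^2]\le U_{\max} < \infty$ and $\mathrm{Var}[f(X_i)] \ge V_{\min}$ are well defined. This legitimizes every moment computation below, and reduces the lemma to two short estimates.

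For the upper bound I would avoid any independence hypothesis and simply apply Cauchy--Schwarz termwise: $\bbE[f(X_i)f(X_j)] \le \sqrt{\bbE[f(X_i)^2]}\,\sqrt{\bbE[f(X_j)^2]} \le U_{\max}$, so that
\begin{equation}
\bbE[W^2] \le U_{\max}\Big(\sum_{i=1}^N |\gamma_i|\Big)^2 \le N\,U_{\max}\,\|\bgamma\|_2^2 \le (N+1)\,U_{\max}\,\|\bgamma\|_2^2,
\end{equation}
where the middle step is Cauchy--Schwarz applied to $\sum_i |\gamma_i|\cdot 1 \le \sqrt{N}\,\|\bgamma\|_2$. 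This already yields the stated upper bound, in fact with the slightly sharper constant $N$.

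For the lower bound I would use $\bbE[W^2] = \mathrm{Var}[W] + (\bbE[W])^2 \ge \mathrm{Var}[W]$ and then expand the variance. The key step, and what I expect to be the main obstacle, is that this expansion needs the $f(X_i)$ to be pairwise uncorrelated, for which I would invoke independence of the $X_i$ (which the setting appears to assume implicitly). Some such hypothesis is genuinely unavoidable here: if two coordinates were perfectly correlated, one could cancel them with opposite signs in $\bgamma$ and drive $\bbE[W^2]$ to zero while $\|\bgamma\|_2 \neq 0$, violating the claimed bound. Under independence the cross terms vanish and
\begin{equation}
\mathrm{Var}[W] = \sum_{i=1}^N \gamma_i^2\,\mathrm{Var}[f(X_i)] \ge V_{\min}\sum_{i=1}^N \gamma_i^2 = V_{\min}\,\|\bgamma\|_2^2,
\end{equation}
which combined with $\bbE[W^2] \ge \mathrm{Var}[W]$ finishes the argument. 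The only real content is thus the variance identity under independence; everything else is Cauchy--Schwarz together with the sub-Gaussian moment bound.
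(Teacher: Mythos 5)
Your proof is correct, and for the lower bound it is essentially the paper's argument: the paper expands $\bbE[W^2]$ as $\sum_i \gamma_i^2\,\mathrm{Var}[f(X_i)] + \big(\sum_i \gamma_i\,\bbE[f(X_i)]\big)^2$ and drops the second (nonnegative) term, which is exactly your $\bbE[W^2]=\mathrm{Var}[W]+(\bbE[W])^2\ge\mathrm{Var}[W]$ followed by the variance identity under independence. Where you diverge is the upper bound: the paper bounds the same two-term decomposition, using $\bbE[f(X_i)]^2\le\bbE[f(X_i)^2]\le U_{\max}$ and Cauchy--Schwarz on the mean vector to get $(N+1)U_{\max}\|\bgamma\|_2^2$, whereas you bound each cross term $\bbE[f(X_i)f(X_j)]\le U_{\max}$ directly by Cauchy--Schwarz and then use $\big(\sum_i|\gamma_i|\big)^2\le N\|\bgamma\|_2^2$. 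Your route needs no independence for the upper half and yields the marginally sharper constant $N$ in place of $N+1$; the paper's route reuses the decomposition it already needs for the lower bound, so neither is strictly preferable. Your remark about independence is well taken: the lemma statement does not assert it, the paper's first displayed line silently factorizes $\bbE[f(X_i)f(X_j)]=\bbE[f(X_i)]\,\bbE[f(X_j)]$ for $i\ne j$, and your counterexample with perfectly correlated coordinates shows the lower bound genuinely fails without some such hypothesis, so making the assumption explicit (as you do) is the right call.
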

\begin{proof}
We have 
\begin{align}
    \bbE\left[\left(\sum_{i=1}^N \gamma_i f(X_i)\right)^2\right] & = \sum_{i=1}^N \gamma^2_i \bbE\left[f(X_i)^2\right] + \sum_{i \ne j} \gamma_i \gamma_j \bbE[f(X_i)]\cdot \bbE[f(X_j)] \\
    & =  \sum_{i=1}^N \gamma^2_i\mathrm{Var}\left[f(X_i)\right] + \left(\sum_{i=1}^N \gamma_i \bbE[f(X_i)]\right)^2 \label{eq:lem_firstIneq}\\
    & \ge \sum_{i=1}^N \gamma^2_i\mathrm{Var}\left[f(X_i)\right] \\
    & \ge V_{\min} \|\bgamma\|_2^2.
\end{align}
In addition, from~\eqref{eq:lem_firstIneq}, by the Cauchy-Schwarz Inequality, we obtain
\begin{align}
     \bbE\left[\left(\sum_{i=1}^N \gamma_i f(X_i)\right)^2\right] & \le U_{\max} \|\bgamma\|_2^2 + \left(\sum_{i=1}^N \bbE[f(X_i)]^2\right)\|\bgamma\|_2^2 \\
     & \le (N+1)U_{\max} \|\bgamma\|_2^2.
\end{align}
This completes the proof. 
\end{proof}

\begin{lemma}\label{lem:spectral_norm_bound}
    Let $\bE \in \bbR^{m \times D}$ be a standard Gaussian matrix, i.e., each entry of $\bE$ is sampled from standard Gaussian distribution, and let $\bW \in \bbR^{D \times D}$ be a fixed matrix that has no zero rows. In addition, for a fixed vector $\bb \in \bbR^D$, let $\bB = [\bb,\bb,\ldots,\bb]^T \in\bbR^{m \times D}$. Then, when $m = \Omega(D)$, with probability $1-e^{-\Omega(m)}$, it holds that  
    \begin{equation}
        \frac{1}{\sqrt{m}}\|f(\bE\bW^T + \bB)\|_{2} \le C\sqrt{D},
    \end{equation}
    where $C$ is an absolute constant and the non-linear activation function $f(\cdot)$ is applied element-wise. 
\end{lemma}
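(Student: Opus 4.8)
The plan is to control the spectral norm of $\bM := f(\bE\bW^T + \bB) \in \bbR^{m\times D}$ by a standard $\epsilon$-net argument, reducing the operator-norm bound to a family of scalar concentration inequalities for the sub-exponential sums $\|\bM\bx\|_2^2$. Writing $\bu_i^T$ for the $i$-th row of $\bM$ and $\bw_j^T$ for the $j$-th row of $\bW$, the entry in position $(i,j)$ equals $f(\be_i^T\bw_j + b_j)$, where $\be_i^T$ is the $i$-th row of $\bE$. Since $\bW$ has no zero rows, each argument $\be_i^T\bw_j + b_j$ is a nondegenerate Gaussian $\calN(b_j,\|\bw_j\|_2^2)$, so by the assumption that $f$ does not grow super-linearly every entry is sub-Gaussian. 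Crucially, the rows $\bu_1,\ldots,\bu_m$ are mutually independent because the rows $\be_1,\ldots,\be_m$ of $\bE$ are.

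First I would fix $\bx \in \calS^{D-1}$ and study $Y_i := \bu_i^T\bx = \sum_{j=1}^D x_j f(\be_i^T\bw_j + b_j)$, which depends only on $\be_i$, so that $\|\bM\bx\|_2^2 = \sum_{i=1}^m Y_i^2$ is a sum of $m$ independent, identically distributed terms. Two facts about a single term are needed. For its mean, Lemma~\ref{lem:exp_bds} with $N = D$ and $\gamma_j = x_j$ (using $\|\bx\|_2 = 1$) gives $\bbE[Y_i^2] \le (D+1)U_{\max} = O(D)$. For its tail, the triangle inequality for the sub-Gaussian norm yields $\|Y_i\|_{\psi_2} \le \sum_j |x_j|\,\|f(\be_i^T\bw_j + b_j)\|_{\psi_2} \le K_{\max}\|\bx\|_1 \le K_{\max}\sqrt{D}$, where $K_{\max}$ is the largest entrywise sub-Gaussian norm (a finite constant depending only on $\bW$, $\bb$, $f$); by Lemma~\ref{lem:prod_subGs} this makes $Y_i^2$ sub-exponential with $\|Y_i^2\|_{\psi_1} \le 2\|Y_i\|_{\psi_2}^2 = O(D)$. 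The key observation is that both the mean and the sub-exponential norm of $Y_i^2$ scale like $D$.

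Next I would apply the concentration inequality~\eqref{eq:subexp2} of Lemma~\ref{lem:large_dev} to the centered variables $Y_i^2 - \bbE[Y_i^2]$, whose sub-exponential norm $K$ remains $O(D)$ after centering. Choosing the deviation level $\epsilon = \Theta(D)$ makes both $\epsilon^2/K^2$ and $\epsilon/K$ order-one constants, so the exponent $-c\min(m\epsilon^2/K^2,\, m\epsilon/K)$ becomes $-\Omega(m)$, independent of $D$. Combined with $\bbE[Y_i^2] = O(D)$, this shows that for each fixed $\bx$ the event $\frac1m\|\bM\bx\|_2^2 \le C_1 D$ fails with probability at most $e^{-\Omega(m)}$.

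Finally I would take $\calN_{1/2}$ a $1/2$-net of $\calS^{D-1}$, which has cardinality at most $5^D$ by Lemma~\ref{lem:unit_sphere_cov}, and union-bound the per-point estimate over it: the total failure probability is at most $5^D e^{-\Omega(m)} = \exp(D\log 5 - \Omega(m))$, which is again $e^{-\Omega(m)}$ precisely because $m = \Omega(D)$ with a sufficiently large implied constant absorbs the $D\log 5$ term. On the complementary event, Lemma~\ref{lem:spectral_norm} gives $\|\bM\|_2 \le 2\max_{\bx\in\calN_{1/2}}\|\bM\bx\|_2 \le 2\sqrt{C_1 D m}$, so $\frac{1}{\sqrt m}\|\bM\|_2 \le C\sqrt{D}$, as claimed. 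I expect the main obstacle to be the bookkeeping that keeps $\epsilon$ proportional to $D$ so that the per-point exponent is $\Omega(m)$ uniformly in $D$; this is exactly what lets the $5^D$ net loss be beaten under the mild hypothesis $m = \Omega(D)$, rather than forcing a stronger requirement such as $m = \Omega(D^2)$.
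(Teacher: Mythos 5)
Your proposal is correct and follows essentially the same route as the paper: a per-vector sub-exponential concentration bound for $\frac{1}{m}\|\bM\bgamma\|_2^2$ around $E_{\bgamma}\le C'D$ (via Lemma~\ref{lem:exp_bds} and Lemma~\ref{lem:large_dev}), a union bound over a $1/2$-net of $\calS^{D-1}$ costing $5^{D}$ that is absorbed by $m=\Omega(D)$, and Lemma~\ref{lem:spectral_norm} to pass to the spectral norm. The only (cosmetic) difference is bookkeeping: you track the sub-exponential norm of $(\bu_i^T\bx)^2$ as $O(D)$ and take the deviation level $\Theta(D)$, whereas the paper asserts an $O(1)$ sub-exponential norm and takes deviation $1/2$ --- your accounting is if anything the more careful of the two, and both yield the same $O(D)$ bound with failure probability $e^{-\Omega(m)}$.
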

\begin{proof}
    Let $\bw_i^T \in \bbR^{1 \times D}$ be the $i$-th row of $\bW$. By the assumption that $\bW$ has no zero rows, we have $\|\bw_i\|_2 > 0$ for all $i \in [N]$. Let $\bH = \bE \bW^T + \bB \in \bbR^{m \times D}$. Then the $(i,j)$-th entry of $\bH$, denoted $h_{ij}$, follows the $\calN\big(b_j,\|\bw_j\|_2^2\big)$ distribution. For any $\bgamma \in \calS^{D-1}$ and any fixed $i$, from Lemma~\ref{lem:prod_subGs}, we obtain that $\big(\sum_{j=1}^D f(h_{ij})\gamma_j\big)^2$ is sub-exponential with the sub-exponential norm being upper bounded by $C$, where $C$ is some absolute constant. Let $E_{\bgamma}$ be the expectation of $\big(\sum_{j=1}^D f(h_{ij})\gamma_j\big)^2$. From Lemma~\ref{lem:exp_bds}, we obtain that it holds uniformly for all $\bgamma \in \calS^{D-1}$ that
    \begin{equation}\label{eq:unif_Egamma_bd}
        E_{\bgamma} \le C' D,
    \end{equation}
    where $C'$ is an absolute constant. In addition, from Lemma~\ref{lem:large_dev}, for any $\epsilon \in (0,1)$, we obtain
    \begin{align}\label{eq:prob_bd_single}
        \bbP\left(\left|\frac{1}{m}\sum_{i=1}^m \left(\sum_{j=1}^D f(h_{ij})\gamma_j\right)^2 - E_{\bgamma}\right| \ge \epsilon\right) \le 2\exp(-\Omega(m\epsilon^2)).
    \end{align}
    From Lemma~\ref{lem:unit_sphere_cov}, there exists an $\epsilon$-net $\calN_{\epsilon}$ of $\calS^{D-1}$ with $\big|\calN_{\epsilon}\big|\le (1+2/\epsilon)^D$. Taking a union bound over $\calN_{\epsilon}$, we have that when $m = \Omega\big(\frac{D}{\epsilon^2}\log\frac{1}{\epsilon}\big)$, with probability $1-e^{-\Omega(m\epsilon^2)}$, it holds for all $\bgamma \in \calN_{\epsilon}$ that 
\begin{align}\label{eq:imp_twosided_bds}
   E_{\bgamma} -\epsilon \le  \frac{1}{m}\|f(\bE\bW^T + \bB)\bgamma\|_2^2 = \frac{1}{m}\sum_{i=1}^m \left(\sum_{j=1}^D f(h_{ij})\gamma_j\right)^2 \le E_{\bgamma} +\epsilon.
\end{align}
   Then, since~\eqref{eq:unif_Egamma_bd} holds uniformly for all $\bgamma \in \calS^{D-1}$, setting $\epsilon = \frac{1}{2}$, Lemma~\ref{lem:spectral_norm} implies that when $m = \Omega(D)$, with probability $1-e^{-\Omega(m)}$, 
    \begin{align}
        \frac{1}{\sqrt{m}}\|f(\bE\bW^T + \bB)\|_{2} &\le 2 \max_{\bgamma \in \calN_{1/2}} \frac{1}{\sqrt{m}}\|f(\bE\bW^T + \bB)\bgamma\|_2 \\
        & \le 2 \max_{\bgamma \in \calN_{1/2}} \sqrt{E_{\bgamma}+\frac{1}{2}} \\
        & \le C \sqrt{D}. 
    \end{align}
\end{proof}

\begin{lemma}\label{lem:restr_eigen}
   Let us use the same notation as in Lemma~\ref{lem:spectral_norm_bound}. When $m = \Omega(D^2 \log D)$, with probability $1-e^{-\Omega(D\log D)}$, it holds for {\em all} $\bgamma \in \bbR^D$ that
    \begin{equation}
        \frac{1}{\sqrt{m}}\|f(\bE\bW^T + \bB)\bgamma\|_2 >\sqrt{\frac{V_{\min}}{2}}\|\bgamma\|_2,
    \end{equation}
    where $V_{\min} = \min_{i \in [D]} \mathrm{Var}[f(X_i)]$ with $X_i \sim \calN(b_i,\|\bw_i\|_2^2)$. 
\end{lemma}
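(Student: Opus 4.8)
The plan is to read this lemma as a uniform lower bound on the smallest singular value of $\frac{1}{\sqrt m} f(\bE\bW^T + \bB)$, and to prove it by mirroring the net-plus-concentration argument already used for the spectral-norm upper bound in Lemma~\ref{lem:spectral_norm_bound}, only now extracting a lower bound. Since both sides are homogeneous of degree one in $\bgamma$, it suffices to establish the claim for all $\bgamma \in \calS^{D-1}$. Writing $\bH = \bE\bW^T + \bB$ with entries $h_{ij} \sim \calN(b_j, \|\bw_j\|_2^2)$ as before, I would express $\frac{1}{m}\|f(\bH)\bgamma\|_2^2 = \frac{1}{m}\sum_{i=1}^m \big(\sum_{j=1}^D f(h_{ij})\gamma_j\big)^2$, so that the quantity of interest is an empirical average of $m$ copies of $Y_i(\bgamma) := \big(\sum_j f(h_{ij})\gamma_j\big)^2$ that are independent across $i$ (each depends only on the $i$-th row of $\bE$), are sub-exponential for the same reason as in Lemma~\ref{lem:spectral_norm_bound}, and have expectation equal to the $E_{\bgamma}$ of that lemma.

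First I would obtain a pointwise lower bound. The crucial input is the \emph{lower} half of Lemma~\ref{lem:exp_bds}, namely $E_{\bgamma} = \bbE[Y_i(\bgamma)] \ge V_{\min}\|\bgamma\|_2^2 = V_{\min}$ for $\bgamma \in \calS^{D-1}$. Fixing $\bgamma$ and applying the sub-exponential deviation inequality of Lemma~\ref{lem:large_dev} to the centred variables $Y_i(\bgamma) - E_{\bgamma}$, the empirical average falls below $E_{\bgamma} - \tfrac{V_{\min}}{4}$ with probability at most $2\exp(-\Omega(m))$; on the complementary event $\frac{1}{m}\|f(\bH)\bgamma\|_2^2 \ge \tfrac{3}{4}V_{\min}$.

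Next I would discretise and take a union bound. Choosing a net $\calN_\epsilon$ of $\calS^{D-1}$ at scale $\epsilon = \Theta(1/\sqrt D)$, Lemma~\ref{lem:unit_sphere_cov} gives $|\calN_\epsilon| \le (1 + 2/\epsilon)^D = e^{O(D\log D)}$. Taking $m = \Omega(D^2\log D)$ makes the per-point failure probability $2\exp(-\Omega(m))$ small enough to survive the union bound, so with probability $1 - e^{-\Omega(D\log D)}$ the bound $\frac{1}{\sqrt m}\|f(\bH)\bgamma_0\|_2 \ge \sqrt{3V_{\min}/4}$ holds simultaneously for every $\bgamma_0 \in \calN_\epsilon$. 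To pass to an arbitrary $\bgamma \in \calS^{D-1}$, I pick the nearest $\bgamma_0$ and use $\frac{1}{\sqrt m}\|f(\bH)\bgamma\|_2 \ge \frac{1}{\sqrt m}\|f(\bH)\bgamma_0\|_2 - \frac{1}{\sqrt m}\|f(\bH)\|_2\,\epsilon$, controlling the last term by $\frac{1}{\sqrt m}\|f(\bH)\|_2 \le C\sqrt D$ from Lemma~\ref{lem:spectral_norm_bound} (valid on the same high-probability event after intersecting). With the net constant chosen small enough, $C\sqrt D\,\epsilon$ is a small constant that the gap between $\sqrt{3V_{\min}/4}$ and $\sqrt{V_{\min}/2}$ absorbs, giving the strict bound $\frac{1}{\sqrt m}\|f(\bH)\bgamma\|_2 > \sqrt{V_{\min}/2}$; homogeneity then extends it to all $\bgamma \in \bbR^D$.

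I expect the discretisation step to be the main obstacle. For the spectral-norm (upper) bound, Lemma~\ref{lem:spectral_norm} converts a bound on the net directly into a global bound, whereas a uniform \emph{lower} bound is genuinely two-sided: it needs both the pointwise lower bound on the net and the operator-norm upper bound to absorb the off-net perturbation, and these must be balanced by playing the net scale $\epsilon \sim 1/\sqrt D$ against the $\Theta(\sqrt D)$ operator norm so that the perturbation does not erase the $\sqrt{V_{\min}}$ margin. Threading this balance through the union bound is what determines the sample requirement; the stated $m = \Omega(D^2\log D)$ comfortably suffices and matches the hypothesis already needed in Theorem~\ref{thm:obj_min}, where this lemma is applied. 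A secondary technical point, inherited from Lemma~\ref{lem:spectral_norm_bound}, is justifying the uniform sub-exponential control of $Y_i(\bgamma)$, which rests on the $h_{ij}$ being Gaussian and $f$ not growing super-linearly (Appendix~\ref{app:useful_lemmas}).
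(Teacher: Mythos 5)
Your proposal is correct and follows essentially the same route as the paper: restrict to the unit sphere, lower-bound $E_{\bgamma}\ge V_{\min}$ via Lemma~\ref{lem:exp_bds}, apply the sub-exponential concentration of Lemma~\ref{lem:large_dev} over an $\epsilon$-net with $\epsilon=\Theta(1/\sqrt{D})$, and absorb the off-net perturbation using the spectral-norm bound of Lemma~\ref{lem:spectral_norm_bound}. The only cosmetic difference is that you take a constant deviation ($V_{\min}/4$) in the concentration step while the paper ties the deviation to the net scale $\epsilon=c/\sqrt{D}$; both choices close the argument under $m=\Omega(D^2\log D)$.
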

\begin{proof}
It suffices to consider the case that $\bgamma \in \calS^{D-1}$. From~\eqref{eq:imp_twosided_bds}, we have that when $m = \Omega\big(\frac{D}{\epsilon^2}\log\frac{1}{\epsilon}\big)$, with probability $1-e^{-\Omega(m\epsilon^2)}$, it holds for all $\bgamma \in \calN_{\epsilon}$ that 
    \begin{equation}\label{eq:imp_lower_bds}
        \frac{1}{m}\|f(\bE\bW^T+\bB)\bgamma\|_2^2 \ge  E_{\bgamma} -\epsilon. 
    \end{equation}
    In addition, from Lemma~\ref{lem:exp_bds}, we have that it hold uniformly for all $\bgamma \in \calS^{D-1}$ that 
    \begin{equation}\label{eq:E_gamma_lb}
        E_{\bgamma} \ge V_{\min}.
    \end{equation}
    Then, if setting $\epsilon  = \frac{c}{\sqrt{D}}$ for some small absolute constant $c$, we have that when $m = \Omega\big(D^2 \log D\big)$, with probability $1-e^{-\Omega(D\log D)}$, for all $\bgamma \in \calN_{\epsilon}$, 
    \begin{equation}
        \frac{1}{\sqrt{m}}\|f(\bE\bW^T+\bB)\bgamma\|_2 \ge \sqrt{E_{\bgamma} -\epsilon}.
    \end{equation}
    For any $\bgamma \in \calS^{D-1}$, there exists an $\bs \in \calN_{\epsilon}$ such that $\|\bs -\bgamma\|_2 \le \epsilon$, and thus
    \begin{align}
        \frac{1}{\sqrt{m}}\|f(\bE\bW^T + \bB)\bgamma\|_2 &  \ge \frac{1}{\sqrt{m}}\|f(\bE\bW^T + \bB)\bs\|_2 - \frac{1}{\sqrt{m}}\|f(\bE\bW^T+\bB)(\bgamma-\bs)\|_2 \\
        & \ge  \sqrt{E_{\bgamma} -\epsilon} -  \frac{1}{\sqrt{m}}\|f(\bE\bW^T+\bB)\|_{2} \cdot \epsilon\\
        & \ge  \sqrt{E_{\bgamma} -\epsilon} - cC,\label{eq:last_cC}
    \end{align}
    where~\eqref{eq:last_cC} follows from Lemma~\ref{lem:spectral_norm_bound} and the setting $\epsilon = \frac{c}{\sqrt{D}}$. Then, if $c$ is set to be  sufficiently small, from~\eqref{eq:E_gamma_lb}, we have for all $\bgamma \in \calS^{D-1}$ that
    \begin{equation}
       \frac{1}{\sqrt{m}}\|f(\bE\bW^T+\bB)\bgamma\|_2 > \sqrt{\frac{V_{\min}}{2}}.
    \end{equation}
\end{proof}

\begin{lemma}\label{lem:last_bd}
    Let $\bZ \in \bbR^{m \times D}$ be a matrix with independent zero-mean sub-Gaussian entries. Suppose that the sub-Gaussian norm of all entries of $\bZ$ is upper bounded by some $\eta > 0$. Then, for any $\bgamma \in \bbR^D$, we have that with probability $1-e^{-\Omega(m)}$, 
    \begin{equation}
        \frac{1}{\sqrt{m}}\|\bZ\bgamma\|_2 \le 2 \eta \|\bgamma\|_2.
    \end{equation}
\end{lemma}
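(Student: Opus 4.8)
The plan is to prove this as a fixed-vector concentration bound, so that \emph{no} $\epsilon$-net or union bound over $\calS^{D-1}$ is needed and the failure probability comes directly from a single application of the sub-exponential tail bound in Lemma~\ref{lem:large_dev}. By homogeneity in $\bgamma$ I may assume $\bgamma \in \calS^{D-1}$, i.e. $\|\bgamma\|_2 = 1$; the general case then follows by scaling both sides. Writing $\bz_i^T$ for the $i$-th row of $\bZ$, we have $\|\bZ\bgamma\|_2^2 = \sum_{i=1}^m (\bz_i^T\bgamma)^2$, so the goal reduces to showing the empirical average $\frac{1}{m}\sum_{i=1}^m (\bz_i^T\bgamma)^2$ does not exceed $4\eta^2$.

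First I would record the two scalar facts needed. Since the entries $z_{ij}$ are independent, zero-mean, and sub-Gaussian with $\|z_{ij}\|_{\psi_2} \le \eta$, each inner product $\bz_i^T\bgamma = \sum_{j=1}^D \gamma_j z_{ij}$ is itself sub-Gaussian with $\|\bz_i^T\bgamma\|_{\psi_2} \le C\eta\|\bgamma\|_2 = C\eta$ (the combination-of-independent-sub-Gaussians property already invoked in the proof of Lemma~\ref{lem:spectral_norm_bound}). Consequently, by Lemma~\ref{lem:prod_subGs}, $(\bz_i^T\bgamma)^2$ is sub-exponential with sub-exponential norm at most $2C^2\eta^2$. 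For the mean, independence and the zero-mean assumption annihilate the cross terms, giving $\bbE[(\bz_i^T\bgamma)^2] = \sum_{j=1}^D \gamma_j^2\, \bbE[z_{ij}^2] \le 2\eta^2\|\bgamma\|_2^2 = 2\eta^2$, where $\bbE[z_{ij}^2] \le 2\eta^2$ follows from the moment definition of the sub-Gaussian norm with $p = 2$.

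Next I would center. Set $W_i := (\bz_i^T\bgamma)^2 - \bbE[(\bz_i^T\bgamma)^2]$; these are independent, zero-mean, and (since subtracting the constant mean changes the sub-exponential norm by at most a universal factor) sub-exponential with $K := \max_i \|W_i\|_{\psi_1} \le C'\eta^2$. Applying the concentration inequality~\eqref{eq:subexp2} of Lemma~\ref{lem:large_dev} with $\epsilon = 2\eta^2$ gives
\begin{equation*}
    \bbP\left(\Big|\frac{1}{m}\sum_{i=1}^m W_i\Big| \ge 2\eta^2\right) \le 2\exp\left(-c\,m\min\Big(\tfrac{(2\eta^2)^2}{K^2},\tfrac{2\eta^2}{K}\Big)\right) = e^{-\Omega(m)},
\end{equation*}
because $2\eta^2/K \ge 2/C'$ is an absolute positive constant, so the exponent scales like $\Omega(m)$. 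On this event, $\frac{1}{m}\sum_i (\bz_i^T\bgamma)^2 \le \bbE[(\bz_i^T\bgamma)^2] + 2\eta^2 \le 4\eta^2$, and taking square roots yields $\frac{1}{\sqrt{m}}\|\bZ\bgamma\|_2 \le 2\eta = 2\eta\|\bgamma\|_2$, as claimed.

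The argument is almost entirely routine concentration; the only delicate points are the bookkeeping of absolute constants, so that the mean bound ($2\eta^2$) and the deviation ($2\eta^2$) combine to exactly the target $4\eta^2 = (2\eta)^2$, and the verification that centering a sub-exponential variable preserves its norm up to a universal factor. I expect the mild obstacle to be confirming that the choice $\epsilon = 2\eta^2$ drives $\min(\epsilon^2/K^2,\epsilon/K)$ to a positive constant \emph{independent of} $\eta$ — which holds precisely because $K$ is itself proportional to $\eta^2$ — so that the failure probability is genuinely $e^{-\Omega(m)}$ with no dependence on $D$. This $D$-independence reflects that, unlike Lemmas~\ref{lem:spectral_norm_bound} and~\ref{lem:restr_eigen}, this is a single fixed-$\bgamma$ statement requiring no covering argument.
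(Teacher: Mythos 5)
Your proposal is correct and follows essentially the same route as the paper's proof: a single fixed-$\bgamma$ application of the sub-exponential concentration bound (Lemma~\ref{lem:large_dev}) to $(\bz_i^T\bgamma)^2$, combined with the moment bound $\bbE[(\bz_i^T\bgamma)^2] \le 2\eta^2\|\bgamma\|_2^2$ from the $p=2$ case of the sub-Gaussian norm definition, with the deviation budget chosen so that mean plus deviation equals $4\eta^2\|\bgamma\|_2^2$. The only cosmetic difference is that you fix the absolute deviation $\epsilon = 2\eta^2$ directly, whereas the paper takes a relative $\epsilon = \min\{2/C,1\}$ multiplying $C\eta^2\|\bgamma\|_2^2$; both yield the same $e^{-\Omega(m)}$, $D$-independent failure probability.
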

\begin{proof}
    Let $\bz_i^T \in \bbR^{1\times D}$ be the $i$-th row of $\bZ \in \bbR^{m\times D}$. From the definition of sub-Gaussian norm, we have that if a random variable $X$ is zero-mean sub-Gaussian with $\|X\|_{\psi_2} \le \eta$, then
    \begin{equation}
        \eta = \sup_{p \ge 1} p^{-1/2} (\bbE[|X|^p])^{1/p} \ge  2^{-1/2} (\bbE[|X|^2])^{1/2}, 
    \end{equation}
    which implies
    \begin{equation}
        \bbE[X^2] \le 2\eta^2. \label{eq:simple_def_bd}
    \end{equation}
    Then, we have 
    \begin{equation}
        \bbE[(\bz_i^T\bgamma)^2] \le 2\eta^2 \|\bgamma\|_2^2. 
    \end{equation}
    In addition, from Lemma~\ref{lem:prod_subGs}, we also have that $(\bz_i^T\bgamma)^2$ is sub-exponential with the sub-exponential norm being upper bounded by $C\eta^2\|\bgamma\|_2^2$. Then, from Lemma~\ref{lem:large_dev}, we obtain that for any $\epsilon \in (0,1)$, with probability $1-e^{-\Omega(m \epsilon^2)}$, 
    \begin{equation}
        \left|\frac{1}{m}\sum_{i=1}^m \left((\bz_i^T\bgamma)^2 - \bbE\left[(\bz_i^T\bgamma)^2\right]\right)\right| \le C\epsilon \eta^2\|\bgamma\|_2^2,
    \end{equation}
    which implies 
    \begin{equation}
       \frac{1}{m}\|\bZ\bgamma\|_2^2  = \frac{1}{m}\sum_{i=1}^m (\bz_i^T\bgamma)^2 \le \frac{1}{m} \sum_{i=1}^m \bbE\left[(\bz_i^T\bgamma)^2\right] + C\epsilon \eta^2\|\bgamma\|_2^2 \le 2\eta^2 \|\bgamma\|_2^2 +  C\epsilon \eta^2\|\bgamma\|_2^2,
    \end{equation}
    where the last inequality follows from~\eqref{eq:simple_def_bd}. Setting $\epsilon = \min\{\frac{2}{C},1\}$, we obtain the desired result. 
\end{proof}

\subsection{Proof of Theorem~\ref{thm:obj_min}} 
\label{app:main_theory}

    Let $\bE = [\bepsilon_1,\ldots,\bepsilon_m]^T \in \bbR^{m \times D}$. Then $\bA$ can be written as $\bA = f(\bE \bW^T + \bB)$, where $\bB = [\bb,\ldots,\bb]^T \in \bbR^{m \times D}$. Then, we have 
    \begin{align}
         \sqrt{m}\|\hat{\bgamma} -\bgamma^*\|_2 & <  \sqrt{\frac{2}{V_{\min}}} \left\|f(\bE \bW^T + \bB) (\hat{\bgamma} -\bgamma^*)\right\|_2 \label{eq:thm2_1}\\
         &  = \sqrt{\frac{2}{V_{\min}}} \left\| \bA (\hat{\bgamma} -\bgamma^*)\right\|_2 \label{eq:thm2_2}\\
         & \le  \sqrt{\frac{2}{V_{\min}}} \left(\left\| \bA \hat{\bgamma} -\by\right\|_2 + \left\|\by - \bA\bgamma^*\right\|_2\right)\label{eq:thm2_3} \\
         & \le\sqrt{\frac{8}{V_{\min}}} \left\|\by - \bA\bgamma^*\right\|_2 \label{eq:thm2_4}\\
         & = \sqrt{\frac{8}{V_{\min}}} \|\bZ\bgamma^*\|_2 \label{eq:thm2_5}\\
         & \le \frac{4\sqrt{2m}\eta}{\sqrt{V_{\min}}} \cdot \|\bgamma^*\|_2,\label{eq:thm2_6}
    \end{align}
    where~\eqref{eq:thm2_1} follows from Lemma~\ref{lem:restr_eigen},~\eqref{eq:thm2_3} follows from the triangle inequality,~\eqref{eq:thm2_4} follows from the condition that $\hat{\bgamma}$ minimizes~\eqref{eq:simple_obj},~\eqref{eq:thm2_5} follows from $\by = (\bA + \bZ) \bgamma^*$. Finally, we use Lemma~\ref{lem:last_bd} to obtain~\eqref{eq:thm2_6}.

\section{More Visualization Results}
In this section, we present additional samples generated by the DiT-XL/2 + DiffFit fine-tuning. Specifically, we demonstrate its efficacy by generating high-quality images with a resolution of 512$\times$512 on the ImageNet dataset, as illustrated in Figures~\ref{fig:imagenet512-1}, \ref{fig:imagenet512-2}, and \ref{fig:imagenet512-3}. Additionally, we showcase the model's capability on 8 downstream tasks by displaying its image generation results with a resolution of 256$\times$256, as depicted in Figures~\ref{fig:vis_food} through \ref{fig:vis_sun}.

\begin{figure*}[t!]
    \centering
    \includegraphics[width=1.0\textwidth]{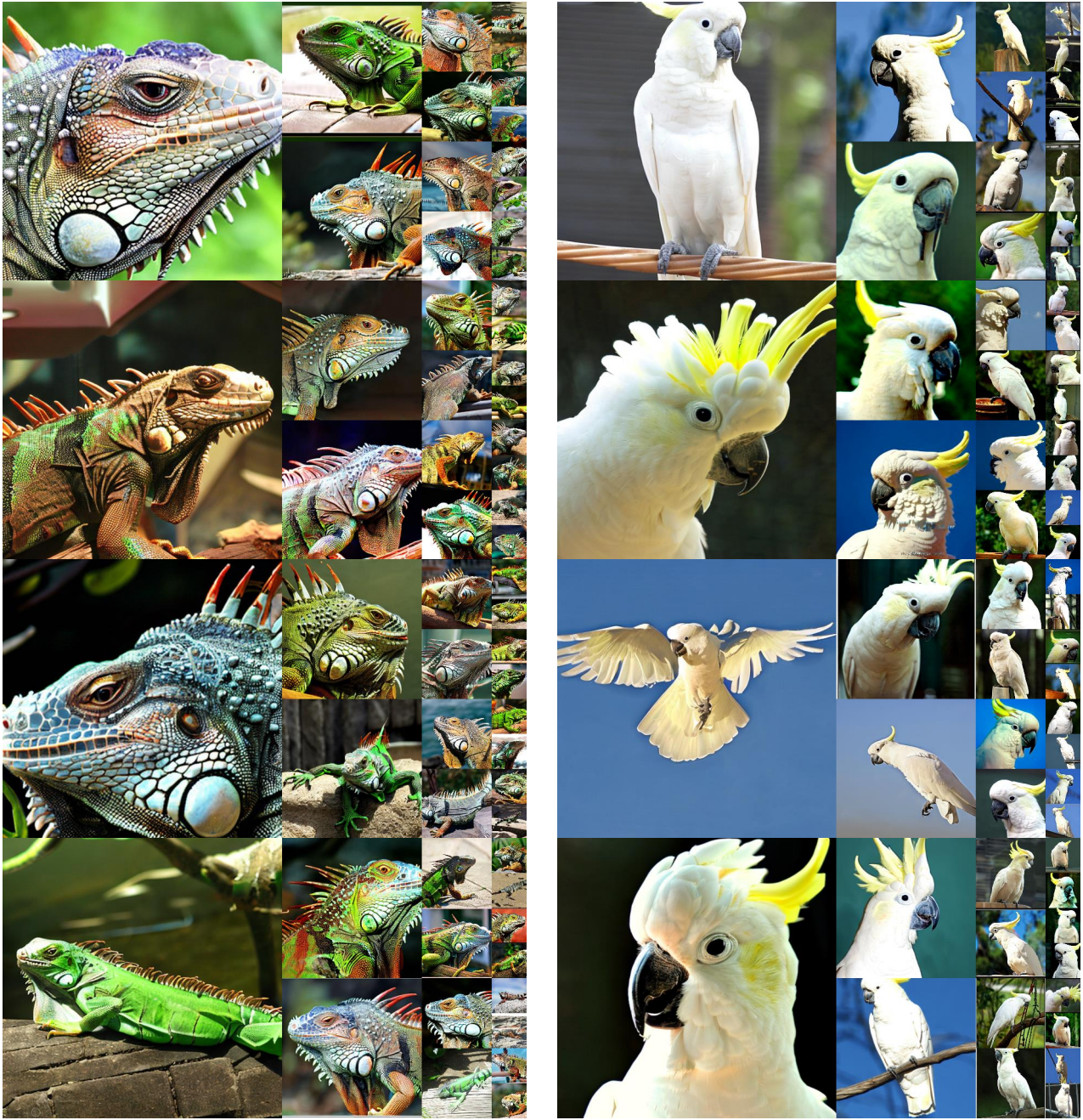}
    \caption{
    Visualization of DiffFit on ImageNet 512$\times$512. 
    Classifier-free guidance scale = 4.0, sampling steps = 250.
    }
    \label{fig:imagenet512-1}
\end{figure*}

\begin{figure*}[t!]
    \centering
    \includegraphics[width=1.0\textwidth]{figs/imagenet512-2.pdf}
    \caption{
    Visualization of DiffFit on ImageNet 512$\times$512. 
    Classifier-free guidance scale = 4.0, sampling steps = 250.
    }
    \label{fig:imagenet512-2}
\end{figure*}

\begin{figure*}[t!]
    \centering
    \includegraphics[width=1.0\textwidth]{figs/imagenet512-3.pdf}
    \caption{
    Visualization of DiffFit on ImageNet 512$\times$512. 
    Classifier-free guidance scale = 4.0, sampling steps = 250.
    }
    \label{fig:imagenet512-3}
\end{figure*}

\begin{figure*}[t]
\vspace{-5mm}
\centering
\hspace{-1em}
    \begin{minipage}{0.48\linewidth}{
        \centering
        \includegraphics[width=1.0\textwidth]{figs/vis_food.pdf}
    \caption{
    Visualization of DiffFit on Food 101. \\
    Classifier-free guidance scale = 4.0, sampling steps = 250.
    }
    \label{fig:vis_food}}
    \end{minipage}
\hspace{2em}
    \begin{minipage}{0.48\linewidth}{
        \centering
        \includegraphics[width=1.0\textwidth]{figs/vis_art.pdf}
    \caption{
    Visualization of DiffFit on ArtBench 10. \\
    Classifier-free guidance scale = 4.0, sampling steps = 250.
    }
    \label{fig:vis_art}}
    \end{minipage}
\end{figure*}

\begin{figure*}[t]
\vspace{-5mm}
\centering
\hspace{-1em}
    \begin{minipage}{0.48\linewidth}{
        \centering
        \includegraphics[width=1.0\textwidth]{figs/vis_car.pdf}
    \caption{
    Visualization of DiffFit on Stanford Cars. \\
    Classifier-free guidance scale = 4.0, sampling steps = 250.
    }
    \label{fig:vis_car}}
    \end{minipage}
\hspace{2em}
    \begin{minipage}{0.48\linewidth}{
        \centering
        \includegraphics[width=1.0\textwidth]{figs/vis_caltech.pdf}
    \caption{
    Visualization of DiffFit on Caltech 101. \\
    Classifier-free guidance scale = 4.0, sampling steps = 250.
    }
    \label{fig:vis_caltech}}
    \end{minipage}
\end{figure*}

\begin{figure*}[t]
\vspace{-5mm}
\centering
\hspace{-1em}
    \begin{minipage}{0.48\linewidth}{
        \centering
        \includegraphics[width=1.0\textwidth]{figs/vis_df20m.pdf}
    \caption{
    Visualization of DiffFit on DF20M. \\
    Classifier-free guidance scale = 4.0, sampling steps = 250.
    }
    \label{fig:vis_df20m}}
    \end{minipage}
\hspace{2em}
    \begin{minipage}{0.48\linewidth}{
        \centering
        \includegraphics[width=1.0\textwidth]{figs/vis_flower.pdf}
    \caption{
    Visualization of DiffFit on Flowers 102. \\
    Classifier-free guidance scale = 4.0, sampling steps = 250.
    }
    \label{fig:vis_flower}}
    \end{minipage}
\end{figure*}

\begin{figure*}[t]
\vspace{-5mm}
\centering
\hspace{-1em}
    \begin{minipage}{0.48\linewidth}{
        \centering
        \includegraphics[width=1.0\textwidth]{figs/vis_cub.pdf}
    \caption{
    Visualization of DiffFit on CUB-200-2011. \\
    Classifier-free guidance scale = 4.0, sampling steps = 250.
    }
    \label{fig:vis_cub}}
    \end{minipage}
\hspace{2em}
    \begin{minipage}{0.48\linewidth}{
        \centering
       \includegraphics[width=1.0\textwidth]{figs/vis_sun.pdf}
    \caption{
    Visualization of DiffFit on SUN 397. \\
    Classifier-free guidance scale = 4.0, sampling steps = 250.
    }
    \label{fig:vis_sun}}
    \end{minipage}
\end{figure*}

\clearpage
\newpage

{\small
\bibliographystyle{ieee_fullname}
\bibliography{egbib}

\begin{thebibliography}{10}\itemsep=-1pt

\bibitem{ediffi}
Yogesh Balaji, Seungjun Nah, Xun Huang, Arash Vahdat, Jiaming Song, Karsten
  Kreis, Miika Aittala, Timo Aila, Samuli Laine, Bryan Catanzaro, et~al.
\newblock ediffi: Text-to-image diffusion models with an ensemble of expert
  denoisers.
\newblock {\em arXiv}, 2022.

\bibitem{bossard2014food}
Lukas Bossard, Matthieu Guillaumin, and Luc Van~Gool.
\newblock Food-101--mining discriminative components with random forests.
\newblock In {\em ECCV}, 2014.

\bibitem{brock2018large}
Andrew Brock, Jeff Donahue, and Karen Simonyan.
\newblock Large scale gan training for high fidelity natural image synthesis.
\newblock {\em arXiv}, 2018.

\bibitem{brown2020language}
Tom Brown, Benjamin Mann, Nick Ryder, Melanie Subbiah, Jared~D Kaplan, Prafulla
  Dhariwal, Arvind Neelakantan, Pranav Shyam, Girish Sastry, Amanda Askell,
  et~al.
\newblock Language models are few-shot learners.
\newblock {\em NeurIPS}, 2020.

\bibitem{cocostuff}
Holger Caesar, Jasper Uijlings, and Vittorio Ferrari.
\newblock Coco-stuff: Thing and stuff classes in context.
\newblock In {\em CVPR}, 2018.

\bibitem{detr}
Nicolas Carion, Francisco Massa, Gabriel Synnaeve, Nicolas Usunier, Alexander
  Kirillov, and Sergey Zagoruyko.
\newblock End-to-end object detection with transformers.
\newblock In {\em ECCV}, 2020.

\bibitem{chen2023score}
Minshuo Chen, Kaixuan Huang, Tuo Zhao, and Mengdi Wang.
\newblock Score approximation, estimation and distribution recovery of
  diffusion models on low-dimensional data.
\newblock {\em arxiv}, 2023.

\bibitem{chen2022sampling}
Sitan Chen, Sinho Chewi, Jerry Li, Yuanzhi Li, Adil Salim, and Anru~R Zhang.
\newblock Sampling is as easy as learning the score: theory for diffusion
  models with minimal data assumptions.
\newblock In {\em ICLR}, 2023.

\bibitem{adaptformer}
Shoufa Chen, Chongjian Ge, Zhan Tong, Jiangliu Wang, Yibing Song, Jue Wang, and
  Ping Luo.
\newblock Adaptformer: Adapting vision transformers for scalable visual
  recognition.
\newblock {\em arXiv}, 2022.

\bibitem{chen2023analog}
Ting Chen, Ruixiang Zhang, and Geoffrey Hinton.
\newblock Analog bits: Generating discrete data using diffusion models with
  self-conditioning.
\newblock In {\em ICLR}, 2023.

\bibitem{creswell2018generative}
Antonia Creswell, Tom White, Vincent Dumoulin, Kai Arulkumaran, Biswa Sengupta,
  and Anil~A Bharath.
\newblock Generative adversarial networks: An overview.
\newblock {\em IEEE signal processing magazine}, 2018.

\bibitem{deconvergence}
Valentin De~Bortoli.
\newblock Convergence of denoising diffusion models under the manifold
  hypothesis.
\newblock {\em TMLR}, 2022.

\bibitem{de2021diffusion}
Valentin De~Bortoli, James Thornton, Jeremy Heng, and Arnaud Doucet.
\newblock Diffusion schr{\"o}dinger bridge with applications to score-based
  generative modeling.
\newblock In {\em NeurIPS}, 2021.

\bibitem{vit22b}
Mostafa Dehghani, Josip Djolonga, Basil Mustafa, Piotr Padlewski, Jonathan
  Heek, Justin Gilmer, Andreas Steiner, Mathilde Caron, Robert Geirhos, Ibrahim
  Alabdulmohsin, et~al.
\newblock Scaling vision transformers to 22 billion parameters.
\newblock {\em arXiv}, 2023.

\bibitem{dhariwal2021diffusion}
Prafulla Dhariwal and Alexander Nichol.
\newblock Diffusion models beat gans on image synthesis.
\newblock {\em NeurIPS}, 2021.

\bibitem{dosovitskiy2020image}
Alexey Dosovitskiy, Lucas Beyer, Alexander Kolesnikov, Dirk Weissenborn,
  Xiaohua Zhai, Thomas Unterthiner, Mostafa Dehghani, Matthias Minderer, Georg
  Heigold, Sylvain Gelly, et~al.
\newblock An image is worth 16x16 words: Transformers for image recognition at
  scale.
\newblock {\em arXiv}, 2020.

\bibitem{dugradient}
Simon~S Du, Xiyu Zhai, Barnabas Poczos, and Aarti Singh.
\newblock Gradient descent provably optimizes over-parameterized neural
  networks.
\newblock In {\em ICLR}, 2019.

\bibitem{taming}
Patrick Esser, Robin Rombach, and Bjorn Ommer.
\newblock Taming transformers for high-resolution image synthesis.
\newblock In {\em CVPR}, 2021.

\bibitem{fedus2021switch}
William Fedus, Barret Zoph, and Noam Shazeer.
\newblock Switch transformers: Scaling to trillion parameter models with simple
  and efficient sparsity.
\newblock {\em JMLR}, 2021.

\bibitem{gao2019learning}
Weihao Gao, Ashok~V Makkuva, Sewoong Oh, and Pramod Viswanath.
\newblock Learning one-hidden-layer neural networks under general input
  distributions.
\newblock In {\em AISTATS}, 2019.

\bibitem{ge2018learning}
Rong Ge, Jason~D Lee, and Tengyu Ma.
\newblock Learning one-hidden-layer neural networks with landscape design.
\newblock In {\em ICLR}, 2018.

\bibitem{griffin2007caltech}
Gregory Griffin, Alex Holub, and Pietro Perona.
\newblock Caltech-256 object category dataset.
\newblock 2007.

\bibitem{tnt}
Kai Han, An Xiao, Enhua Wu, Jianyuan Guo, Chunjing Xu, and Yunhe Wang.
\newblock Transformer in transformer.
\newblock {\em NeurIPS}, 2021.

\bibitem{he2022masked}
Kaiming He, Xinlei Chen, Saining Xie, Yanghao Li, Piotr Doll{\'a}r, and Ross
  Girshick.
\newblock Masked autoencoders are scalable vision learners.
\newblock In {\em CVPR}, 2022.

\bibitem{he2022parameter}
Xuehai He, Chunyuan Li, Pengchuan Zhang, Jianwei Yang, and Xin~Eric Wang.
\newblock Parameter-efficient fine-tuning for vision transformers.
\newblock {\em arXiv}, 2022.

\bibitem{ho2022imagen}
Jonathan Ho, William Chan, Chitwan Saharia, Jay Whang, Ruiqi Gao, Alexey
  Gritsenko, Diederik~P Kingma, Ben Poole, Mohammad Norouzi, David~J Fleet,
  et~al.
\newblock Imagen video: High definition video generation with diffusion models.
\newblock {\em arXiv}, 2022.

\bibitem{ho2020denoising}
Jonathan Ho, Ajay Jain, and Pieter Abbeel.
\newblock Denoising diffusion probabilistic models.
\newblock {\em NeurIPS}, 2020.

\bibitem{ho2022video}
Jonathan Ho, Tim Salimans, Alexey Gritsenko, William Chan, Mohammad Norouzi,
  and David~J Fleet.
\newblock Video diffusion models.
\newblock {\em arXiv}, 2022.

\bibitem{houlsby2019parameter}
Neil Houlsby, Andrei Giurgiu, Stanislaw Jastrzebski, Bruna Morrone, Quentin
  De~Laroussilhe, Andrea Gesmundo, Mona Attariyan, and Sylvain Gelly.
\newblock Parameter-efficient transfer learning for nlp.
\newblock In {\em ICML}, 2019.

\bibitem{lora}
Edward~J Hu, Yelong Shen, Phillip Wallis, Zeyuan Allen-Zhu, Yuanzhi Li, Shean
  Wang, Lu Wang, and Weizhu Chen.
\newblock Lora: Low-rank adaptation of large language models.
\newblock {\em arxiv}, 2021.

\bibitem{vpt}
Menglin Jia, Luming Tang, Bor-Chun Chen, Claire Cardie, Serge Belongie, Bharath
  Hariharan, and Ser-Nam Lim.
\newblock Visual prompt tuning.
\newblock In {\em ECCV}, 2022.

\bibitem{transgan}
Yifan Jiang, Shiyu Chang, and Zhangyang Wang.
\newblock Transgan: Two pure transformers can make one strong gan, and that can
  scale up.
\newblock {\em NeurIPS}, 2021.

\bibitem{kingma2021variational}
Diederik Kingma, Tim Salimans, Ben Poole, and Jonathan Ho.
\newblock Variational diffusion models.
\newblock {\em NeurIPS}, 2021.

\bibitem{kingma2019introduction}
Diederik~P Kingma, Max Welling, et~al.
\newblock An introduction to variational autoencoders.
\newblock {\em Foundations and Trends{\textregistered} in Machine Learning},
  2019.

\bibitem{cars}
Jonathan Krause, Michael Stark, Jia Deng, and Li Fei-Fei.
\newblock 3d object representations for fine-grained categorization.
\newblock In {\em ICCV workshops}, 2013.

\bibitem{leeconvergence}
Holden Lee, Jianfeng Lu, and Yixin Tan.
\newblock Convergence for score-based generative modeling with polynomial
  complexity.
\newblock In {\em NeurIPS}, 2022.

\bibitem{lester2021power}
Brian Lester, Rami Al-Rfou, and Noah Constant.
\newblock The power of scale for parameter-efficient prompt tuning.
\newblock {\em arXiv}, 2021.

\bibitem{blip}
Junnan Li, Dongxu Li, Caiming Xiong, and Steven Hoi.
\newblock Blip: Bootstrapping language-image pre-training for unified
  vision-language understanding and generation.
\newblock In {\em ICML}, 2022.

\bibitem{li2021prefix}
Xiang~Lisa Li and Percy Liang.
\newblock Prefix-tuning: Optimizing continuous prompts for generation.
\newblock {\em arXiv}, 2021.

\bibitem{li2022bevformer}
Zhiqi Li, Wenhai Wang, Hongyang Li, Enze Xie, Chonghao Sima, Tong Lu, Yu Qiao,
  and Jifeng Dai.
\newblock Bevformer: Learning bird’s-eye-view representation from
  multi-camera images via spatiotemporal transformers.
\newblock In {\em ECCV}, 2022.

\bibitem{li2022panoptic}
Zhiqi Li, Wenhai Wang, Enze Xie, Zhiding Yu, Anima Anandkumar, Jose~M Alvarez,
  Ping Luo, and Tong Lu.
\newblock Panoptic segformer: Delving deeper into panoptic segmentation with
  transformers.
\newblock In {\em CVPR}, 2022.

\bibitem{lian2022scaling}
Dongze Lian, Daquan Zhou, Jiashi Feng, and Xinchao Wang.
\newblock Scaling \& shifting your features: A new baseline for efficient model
  tuning.
\newblock {\em arXiv}, 2022.

\bibitem{liao2022artbench}
Peiyuan Liao, Xiuyu Li, Xihui Liu, and Kurt Keutzer.
\newblock The artbench dataset: Benchmarking generative models with artworks.
\newblock {\em arXiv}, 2022.

\bibitem{liu2023flow}
Xingchao Liu, Chengyue Gong, and Qiang Liu.
\newblock Flow straight and fast: Learning to generate and transfer data with
  rectified flow.
\newblock In {\em ICLR}, 2023.

\bibitem{swin}
Ze Liu, Yutong Lin, Yue Cao, Han Hu, Yixuan Wei, Zheng Zhang, Stephen Lin, and
  Baining Guo.
\newblock Swin transformer: Hierarchical vision transformer using shifted
  windows.
\newblock In {\em ICCV}, 2021.

\bibitem{liugenerative}
Zhaoqiang Liu, Jiulong Liu, Subhroshekhar Ghosh, Jun Han, and Jonathan
  Scarlett.
\newblock Generative principal component analysis.
\newblock In {\em ICLR}, 2022.

\bibitem{liu2022video}
Ze Liu, Jia Ning, Yue Cao, Yixuan Wei, Zheng Zhang, Stephen Lin, and Han Hu.
\newblock Video swin transformer.
\newblock In {\em CVPR}, 2022.

\bibitem{liu2020generalized}
Zhaoqiang Liu and Jonathan Scarlett.
\newblock The generalized lasso with nonlinear observations and generative
  priors.
\newblock In {\em NeurIPS}, 2020.

\bibitem{logan2021cutting}
Robert~L Logan~IV, Ivana Bala{\v{z}}evi{\'c}, Eric Wallace, Fabio Petroni,
  Sameer Singh, and Sebastian Riedel.
\newblock Cutting down on prompts and parameters: Simple few-shot learning with
  language models.
\newblock {\em arXiv}, 2021.

\bibitem{nichol2021improved}
Alexander~Quinn Nichol and Prafulla Dhariwal.
\newblock Improved denoising diffusion probabilistic models.
\newblock In {\em ICML}, 2021.

\bibitem{flowers}
Maria-Elena Nilsback and Andrew Zisserman.
\newblock Automated flower classification over a large number of classes.
\newblock In {\em 2008 Sixth Indian Conference on Computer Vision, Graphics \&
  Image Processing}, 2008.

\bibitem{parmar2021cleanfid}
Gaurav Parmar, Richard Zhang, and Jun-Yan Zhu.
\newblock On aliased resizing and surprising subtleties in gan evaluation.
\newblock In {\em CVPR}, 2022.

\bibitem{dit}
William Peebles and Saining Xie.
\newblock Scalable diffusion models with transformers.
\newblock {\em arXiv}, 2022.

\bibitem{picek2022danish}
Luk{\'a}{\v{s}} Picek, Milan {\v{S}}ulc, Ji{\v{r}}{\'\i} Matas, Thomas~S
  Jeppesen, Jacob Heilmann-Clausen, Thomas L{\ae}ss{\o}e, and Tobias
  Fr{\o}slev.
\newblock Danish fungi 2020-not just another image recognition dataset.
\newblock In {\em WACV}, 2022.

\bibitem{poole2022dreamfusion}
Ben Poole, Ajay Jain, Jonathan~T Barron, and Ben Mildenhall.
\newblock Dreamfusion: Text-to-3d using 2d diffusion.
\newblock {\em arXiv}, 2022.

\bibitem{radford2018improving}
Alec Radford, Karthik Narasimhan, Tim Salimans, Ilya Sutskever, et~al.
\newblock Improving language understanding by generative pre-training.
\newblock 2018.

\bibitem{radford2019language}
Alec Radford, Jeffrey Wu, Rewon Child, David Luan, Dario Amodei, Ilya
  Sutskever, et~al.
\newblock Language models are unsupervised multitask learners.
\newblock {\em OpenAI blog}, 2019.

\bibitem{dalle2}
Aditya Ramesh, Prafulla Dhariwal, Alex Nichol, Casey Chu, and Mark Chen.
\newblock Hierarchical text-conditional image generation with clip latents.
\newblock {\em arXiv}, 2022.

\bibitem{rezende2015variational}
Danilo Rezende and Shakir Mohamed.
\newblock Variational inference with normalizing flows.
\newblock In {\em ICML}, 2015.

\bibitem{sd}
Robin Rombach, Andreas Blattmann, Dominik Lorenz, Patrick Esser, and Bj{\"o}rn
  Ommer.
\newblock High-resolution image synthesis with latent diffusion models.
\newblock In {\em CVPR}, 2022.

\bibitem{dreambooth}
Nataniel Ruiz, Yuanzhen Li, Varun Jampani, Yael Pritch, Michael Rubinstein, and
  Kfir Aberman.
\newblock Dreambooth: Fine tuning text-to-image diffusion models for
  subject-driven generation.
\newblock {\em arXiv}, 2022.

\bibitem{imagen}
Chitwan Saharia, William Chan, Saurabh Saxena, Lala Li, Jay Whang, Emily
  Denton, Seyed Kamyar~Seyed Ghasemipour, Burcu~Karagol Ayan, S~Sara Mahdavi,
  Rapha~Gontijo Lopes, et~al.
\newblock Photorealistic text-to-image diffusion models with deep language
  understanding.
\newblock {\em arXiv}, 2022.

\bibitem{sahariaphotorealistic}
Chitwan Saharia, William Chan, Saurabh Saxena, Lala Li, Jay Whang, Emily
  Denton, Seyed Kamyar~Seyed Ghasemipour, Raphael Gontijo-Lopes, Burcu~Karagol
  Ayan, Tim Salimans, et~al.
\newblock Photorealistic text-to-image diffusion models with deep language
  understanding.
\newblock In {\em NeurIPS}.

\bibitem{sauer2022stylegan}
Axel Sauer, Katja Schwarz, and Andreas Geiger.
\newblock Stylegan-xl: Scaling stylegan to large diverse datasets.
\newblock In {\em SIGGRAPH}, 2022.

\bibitem{song2023consistency}
Yang Song, Prafulla Dhariwal, Mark Chen, and Ilya Sutskever.
\newblock Consistency models.
\newblock {\em arxiv}, 2023.

\bibitem{song2019generative}
Yang Song and Stefano Ermon.
\newblock Generative modeling by estimating gradients of the data distribution.
\newblock {\em NeurIPS}, 2019.

\bibitem{songscore}
Yang Song, Jascha Sohl-Dickstein, Diederik~P Kingma, Abhishek Kumar, Stefano
  Ermon, and Ben Poole.
\newblock Score-based generative modeling through stochastic differential
  equations.
\newblock In {\em ICLR}.

\bibitem{song2021scorebased}
Yang Song, Jascha Sohl-Dickstein, Diederik~P Kingma, Abhishek Kumar, Stefano
  Ermon, and Ben Poole.
\newblock Score-based generative modeling through stochastic differential
  equations.
\newblock In {\em ICLR}, 2021.

\bibitem{segmenter}
Robin Strudel, Ricardo Garcia, Ivan Laptev, and Cordelia Schmid.
\newblock Segmenter: Transformer for semantic segmentation.
\newblock In {\em ICCV}, 2021.

\bibitem{transtrack}
Peize Sun, Jinkun Cao, Yi Jiang, Rufeng Zhang, Enze Xie, Zehuan Yuan, Changhu
  Wang, and Ping Luo.
\newblock Transtrack: Multiple object tracking with transformer.
\newblock {\em arXiv}, 2020.

\bibitem{deit}
Hugo Touvron, Matthieu Cord, Matthijs Douze, Francisco Massa, Alexandre
  Sablayrolles, and Herv{\'e} J{\'e}gou.
\newblock Training data-efficient image transformers \& distillation through
  attention.
\newblock In {\em ICML}, 2021.

\bibitem{llama}
Hugo Touvron, Thibaut Lavril, Gautier Izacard, Xavier Martinet, Marie-Anne
  Lachaux, Timoth{\'e}e Lacroix, Baptiste Rozi{\`e}re, Naman Goyal, Eric
  Hambro, Faisal Azhar, et~al.
\newblock Llama: Open and efficient foundation language models.
\newblock {\em arXiv}, 2023.

\bibitem{vaswani2017attention}
Ashish Vaswani, Noam Shazeer, Niki Parmar, Jakob Uszkoreit, Llion Jones,
  Aidan~N Gomez, {\L}ukasz Kaiser, and Illia Polosukhin.
\newblock Attention is all you need.
\newblock {\em NeurIPS}, 2017.

\bibitem{venturi2019spurious}
Luca Venturi, Afonso~S Bandeira, and Joan Bruna.
\newblock Spurious valleys in one-hidden-layer neural network optimization
  landscapes.
\newblock {\em JMLR}, 2019.

\bibitem{vershynin2010introduction}
Roman Vershynin.
\newblock Introduction to the non-asymptotic analysis of random matrices.
\newblock {\em arxiv}, 2010.

\bibitem{Wah2011TheCB}
Catherine Wah, Steve Branson, Peter Welinder, Pietro Perona, and Serge~J.
  Belongie.
\newblock The caltech-ucsd birds-200-2011 dataset.
\newblock 2011.

\bibitem{pvt}
Wenhai Wang, Enze Xie, Xiang Li, Deng-Ping Fan, Kaitao Song, Ding Liang, Tong
  Lu, Ping Luo, and Ling Shao.
\newblock Pyramid vision transformer: A versatile backbone for dense prediction
  without convolutions.
\newblock In {\em ICCV}, 2021.

\bibitem{pvtv2}
Wenhai Wang, Enze Xie, Xiang Li, Deng-Ping Fan, Kaitao Song, Ding Liang, Tong
  Lu, Ping Luo, and Ling Shao.
\newblock Pvt v2: Improved baselines with pyramid vision transformer.
\newblock {\em Computational Visual Media}, 2022.

\bibitem{xiao2010sun}
Jianxiong Xiao, James Hays, Krista~A Ehinger, Aude Oliva, and Antonio Torralba.
\newblock Sun database: Large-scale scene recognition from abbey to zoo.
\newblock In {\em 2010 IEEE computer society conference on computer vision and
  pattern recognition}, 2010.

\bibitem{segformer}
Enze Xie, Wenhai Wang, Zhiding Yu, Anima Anandkumar, Jose~M Alvarez, and Ping
  Luo.
\newblock Segformer: Simple and efficient design for semantic segmentation with
  transformers.
\newblock {\em NeurIPS}, 2021.

\bibitem{xu2021raise}
Runxin Xu, Fuli Luo, Zhiyuan Zhang, Chuanqi Tan, Baobao Chang, Songfang Huang,
  and Fei Huang.
\newblock Raise a child in large language model: Towards effective and
  generalizable fine-tuning.
\newblock {\em arXiv}, 2021.

\bibitem{t2tvit}
Li Yuan, Yunpeng Chen, Tao Wang, Weihao Yu, Yujun Shi, Zi-Hang Jiang,
  Francis~EH Tay, Jiashi Feng, and Shuicheng Yan.
\newblock Tokens-to-token vit: Training vision transformers from scratch on
  imagenet.
\newblock In {\em ICCV}, 2021.

\bibitem{bitfit}
Elad~Ben Zaken, Shauli Ravfogel, and Yoav Goldberg.
\newblock Bitfit: Simple parameter-efficient fine-tuning for transformer-based
  masked language-models.
\newblock {\em arXiv}, 2021.

\bibitem{controlnet}
Lvmin Zhang and Maneesh Agrawala.
\newblock Adding conditional control to text-to-image diffusion models.
\newblock {\em arXiv}, 2023.

\bibitem{zhang2019learning}
Xiao Zhang, Yaodong Yu, Lingxiao Wang, and Quanquan Gu.
\newblock Learning one-hidden-layer relu networks via gradient descent.
\newblock In {\em AISTATS}, 2019.

\bibitem{zhao2021point}
Hengshuang Zhao, Li Jiang, Jiaya Jia, Philip~HS Torr, and Vladlen Koltun.
\newblock Point transformer.
\newblock In {\em ICCV}, 2021.

\bibitem{setr}
Sixiao Zheng, Jiachen Lu, Hengshuang Zhao, Xiatian Zhu, Zekun Luo, Yabiao Wang,
  Yanwei Fu, Jianfeng Feng, Tao Xiang, Philip~HS Torr, et~al.
\newblock Rethinking semantic segmentation from a sequence-to-sequence
  perspective with transformers.
\newblock In {\em CVPR}, 2021.

\bibitem{zhong2017recovery}
Kai Zhong, Zhao Song, Prateek Jain, Peter~L Bartlett, and Inderjit~S Dhillon.
\newblock Recovery guarantees for one-hidden-layer neural networks.
\newblock In {\em ICML}, 2017.

\bibitem{deepvit}
Daquan Zhou, Bingyi Kang, Xiaojie Jin, Linjie Yang, Xiaochen Lian, Zihang
  Jiang, Qibin Hou, and Jiashi Feng.
\newblock Deepvit: Towards deeper vision transformer.
\newblock {\em arXiv}, 2021.

\bibitem{zhou2022magicvideo}
Daquan Zhou, Weimin Wang, Hanshu Yan, Weiwei Lv, Yizhe Zhu, and Jiashi Feng.
\newblock Magicvideo: Efficient video generation with latent diffusion models.
\newblock {\em arXiv}, 2022.

\bibitem{zhou2022learning}
Kaiyang Zhou, Jingkang Yang, Chen~Change Loy, and Ziwei Liu.
\newblock Learning to prompt for vision-language models.
\newblock {\em IJCV}, 2022.

\end{thebibliography}
}

\end{document}